
\documentclass[letterpaper, 10 pt, conference]{ieeeconf}  

\IEEEoverridecommandlockouts{}                              
\overrideIEEEmargins{}                                      

\usepackage{fix-cm}
\usepackage{etex}

\usepackage{dblfloatfix}

\usepackage{nag}


\makeatletter
\@ifpackageloaded{xcolor}{}{%
\usepackage[table,x11names,dvipsnames,svgnames]{xcolor}%
}
\makeatother

\usepackage{colortbl}

\usepackage{graphicx}
\usepackage{wrapfig}

\definecolorset{RGB}{lyft}{}{Red,194,39,36;Sunset,202,53,33;Orange,205,68,20;Amber,200,117,42;Yellow,242,169,52;Citron,186,188,44;Lime,112,159,33;Green,56,139,31;Mint,45,118,56;Teal,52,133,135;Cyan,60,132,202;Blue,55,94,248;Indigo,64,13,247;Purple,115,42,248;Pink,176,25,145;Rose,176,32,75}


\usepackage{cite}

\usepackage{microtype}

\usepackage[american]{babel}

\usepackage{array}
\usepackage{multirow}
\usepackage{booktabs}
\usepackage{makecell} 



\ifcsname labelindent\endcsname

\fi
\usepackage[inline]{enumitem}

\usepackage{subfig}

\setcounter{topnumber}{2}
\setcounter{bottomnumber}{2}
\setcounter{totalnumber}{4}

\newenvironment{lenumerate}[2][]
{\begin{enumerate}[label=(#2\arabic*),leftmargin=0.2in,itemindent=0.15in,#1]}
{\end{enumerate}}



\setlist*[enumerate,1]{label={\itshape\arabic*)}}

\makeatletter
\newcommand{\paragraphswithstop}{%
\let\copyparagraph\paragraph%
\renewcommand\paragraph[1]{\copyparagraph{##1.}}%
}
\makeatother

\usepackage[framemethod=tikz]{mdframed}

\makeatletter
\def\namedlabel#1#2{\begingroup
  #2%
  \def\@currentlabel{#2}%
  \phantomsection\label{#1}\endgroup
}
\makeatother


%
\usepackage{suffix}

\usepackage{environ}

\makeatletter
\newsavebox{\boxifnotempty}
\newcommand{\displayifnotempty}[3]{\sbox\boxifnotempty{#2}\setbox0=\hbox{\usebox{\boxifnotempty}\unskip}%
\ifdim\wd0=0pt
\else
 #1\usebox{\boxifnotempty}#3%
\fi%
}
\newcommand{\ifempty}[2]{\setbox0=\hbox{#1\unskip}%
\ifdim\wd0=0pt%
 #2%
\fi%
}
\newcommand{\ifnotempty}[2]{\setbox0=\hbox{#1\unskip}%
\ifdim\wd0>0pt%
 #2%
\fi%
}
\makeatother

\usepackage{algorithm}

\usepackage{scrlfile}

\makeatletter
\newcommand*\newstoreddef[1]{
  \BeforeClosingMainAux{%
    \immediate\write\@auxout{%
      \string\restoredef{#1}{\csname #1\endcsname}%
    }%
  }%
}
\newcommand*{\restoredef}[2]{
  \expandafter\gdef\csname stored@#1\endcsname{#2}%
}
\newcommand*{\storeddef}[1]{
  \@ifundefined{stored@#1}{0}{\csname stored@#1\endcsname}%
}
\makeatother



\usepackage{pageslts}
\pagenumbering{arabic}

\NewEnviron{tee}{\BODY\typeout{Marker Tee [start] ^^J \BODY ^^JMaker Tee [end]}}


\input{preamble/math}
\newcommand{\real}[1]{\mathbb{R}^{#1}{}}

\newcommand{\bmat}[1]{\begin{bmatrix}#1\end{bmatrix}}

\newcommand{\transpose}{^\mathrm{T}}


\newcommand{\defeq}{\doteq}

\DeclarePairedDelimiter{\abs}{\lvert}{\rvert}

\DeclarePairedDelimiter{\norm}{\lVert}{\rVert}


\newcommand{\vct}[1]{\mathbf{#1}}


\DeclareMathOperator{\proj}{proj}

\DeclareMathOperator{\stack}{stack}


\newcommand{\subjectto}{\textrm{subject to}\;}



\newcommand{\ijE}[1][]{(i,j) \in E_{#1}}








\providecommand{\cC}{\mathcal{C}}

\providecommand{\cE}{\mathcal{E}}

\providecommand{\cI}{\mathcal{I}}

\providecommand{\cL}{\mathcal{L}}

\providecommand{\cO}{\mathcal{O}}

\providecommand{\cT}{\mathcal{T}}
\providecommand{\cU}{\mathcal{U}}
\providecommand{\cV}{\mathcal{V}}

\providecommand{\cX}{\mathcal{X}}


\usepackage{units}


  \newcommand{\newcolorlabel}[2]{%
  \expandafter\newcommand\csname #1\endcsname[1]{%
    \tikz[baseline]{\node[text=white,fill=#2,anchor=base,text height=1.3ex,text depth=0.1ex,font=\sffamily\bfseries]{##1}}}%
}

%
\newcommand{\newcommenter}[2]{%
  \expandafter\newcommand\csname #1\endcsname[1]{%
    \fcolorbox{#2}{#2}{\color{white}\textsf{\textbf{#1}}}
    {\color{#2}##1}}%
  \expandafter\newcommand\csname at#1\endcsname{%
    \fcolorbox{#2}{#2}{\color{white}\textsf{\textbf{@#1}}}
    {\color{#2}}}%
  \expandafter\newcommand\csname #1hl\endcsname[2]{%
    \colorbox{#2}{\color{white}\textsf{\textbf{#1}}}\sethlcolor{Azure2}\hl{##2}~%
    \expandafter\ifx\csname commentarrow\endcsname\relax$\leftarrow$\else \commentarrow[#2]\fi~%
    {\color{#2}##1}}%
  \expandafter\newcommand\csname #1st\endcsname[2]{%
    \colorbox{#2}{\color{white}\textsf{\textbf{#1}}}\sout{##2}~%
    \expandafter\ifx\csname commentarrow\endcsname\relax$\leftarrow$\else \commentarrow[#2]\fi~%
    {\color{#2}##1}}%
}
\newcommenter{TODO}{DodgerBlue1}
\newcommenter{rtron}{Green3}

\usepackage{comment}

\usepackage{pdfcomment}

\usepackage{soul}

\usepackage[normalem]{ulem}

\usepackage{csquotes}



\usepackage{tikz}
\usetikzlibrary{calc}
\usetikzlibrary{matrix}
\usetikzlibrary{chains}
\usetikzlibrary{shapes.geometric}
\usetikzlibrary{arrows.meta}
\usetikzlibrary{decorations.pathreplacing}
\usetikzlibrary{backgrounds}


\tikzset{
  dim above/.style={to path={\pgfextra{
        \pgfinterruptpath
        \draw[>=latex,|->|] let
        \p1=($(\tikztostart)!1.5em!90:(\tikztotarget)$),
        \p2=($(\tikztotarget)!1.5em!-90:(\tikztostart)$)
        in(\p1) -- (\p2) node[pos=.5,sloped,above]{#1};
        \endpgfinterruptpath
      }
    }
  },
  dim double above/.style={to path={\pgfextra{
        \pgfinterruptpath
        \draw[>=latex,|->|] let
        \p1=($(\tikztostart)!3em!90:(\tikztotarget)$),
        \p2=($(\tikztotarget)!3em!-90:(\tikztostart)$)
        in(\p1) -- (\p2) node[pos=.5,sloped,above]{#1};
        \endpgfinterruptpath
      }
    }
  },
  dim below/.style={to path={\pgfextra{
        \pgfinterruptpath
        \draw[>=latex,|->|] let 
        \p1=($(\tikztostart)!-1em!-90:(\tikztotarget)$),
        \p2=($(\tikztotarget)!-1em!90:(\tikztostart)$)
        in (\p1) -- (\p2) node[pos=.5,sloped,below]{#1};
        \endpgfinterruptpath
      }
    }
  },
}

\tikzset{
    right angle quadrant/.code={
        \pgfmathsetmacro\quadranta{{1,1,-1,-1}[#1-1]}     
        \pgfmathsetmacro\quadrantb{{1,-1,-1,1}[#1-1]}},
    right angle quadrant=1, 
    right angle length/.code={\def\rightanglelength{#1}},   
    right angle length=2ex, 
    right angle symbol/.style n args={3}{
        insert path={
            let \p0 = ($(#1)!(#3)!(#2)$) in     
                let \p1 = ($(\p0)!\quadranta*\rightanglelength!(#3)$), 
                \p2 = ($(\p0)!\quadrantb*\rightanglelength!(#2)$) in 
                let \p3 = ($(\p1)+(\p2)-(\p0)$) in  
            (\p1) -- (\p3) -- (\p2)
        }
    }
}


\newcommand{\pgfextractangle}[3]{%
    \pgfmathanglebetweenpoints{\pgfpointanchor{#2}{center}}
                              {\pgfpointanchor{#3}{center}}
    \global\let#1\pgfmathresult  
}

\usetikzlibrary{shapes.arrows}
\newcommand{\commentarrow}[1][Azure4]{\tikz[baseline=-3pt]{\node[shape border uses incircle, fill=#1,rotate=180,single arrow, inner sep=1pt, minimum size=6pt, single arrow head extend=2pt]{};}}


\tikzset{ax/.style={-latex,line width=2pt}}

\tikzset{camera/.style={fill=Sienna1,fill opacity=0.5},%
image plane/.style={draw=RoyalBlue3,line width=2pt}}



\usepackage{tkz-euclide}
\newcommenter{mmitjans}{Emerald}
\newcommenter{mahroo}{BurntOrange}
\newcommand{\rrt}{{\texttt{RRT$^*$}}}
\newcommand{\rrtstar}{{\texttt{RRT$^*$}}}
\newcommand{\rrtfunc}[1]{{\fontfamily{qcr}\selectfont #1}}
\usepackage[dvipsnames]{xcolor}
\usepackage{subfig}

\newsavebox{\bigleftbox}





\title{\LARGE \bf
  Robust Sample-Based Output-Feedback  Path Planning}
 
\author{Mahroo Bahreinian$^{1}$, Marc Mitjans$^{2}$, and Roberto Tron$^{3}$

  \thanks{This work was supported by ONR MURI N00014-19-1-2571 ``Neuro-Autonomy: Neuroscience-Inspired Perception, Navigation, and Spatial Awareness''}
  \thanks{$^{1}$Division of Systems Engineering at Boston University, Boston, MA, 02215 USA.
    \{\tt\small mahroobh@bu.edu\}}%
  \thanks{$^{2}$Department of Mechanical Engineering at Boston University, Boston, MA, 02215 USA. M. Mitjans is additionally supported by ”la Caixa” Foundation fellowship LCF/BQ/AA18/11680117.
    \{\tt\small mmitjans@bu.edu\}}%
  \thanks{$^{3}$Department of Mechanical and System Engineering.\{\tt\small tron@bu.edu\}}%
}

\usepackage{algorithm}

\usepackage{algpseudocode}
\begin{document}

\maketitle
\thispagestyle{empty}
\pagestyle{empty}

\begin{abstract}
  We propose a novel approach for sampling-based and control-based motion planning that combines a representation of the environment obtained via modified version of optimal Rapidly-exploring Random Trees (\rrtstar{}), with landmark-based output-feedback controllers obtained via Control Lyapunov Functions, Control Barrier Functions, and robust Linear Programming. Our solution inherits many benefits of \rrtstar-like algorithms, such as the ability to implicitly handle arbitrarily complex obstacles, and asymptotic optimality. Additionally, it extends planning beyond the discrete nominal paths, as feedback controllers can correct deviations from such paths, and are robust to discrepancies between the map used for planning and the real environment.
  We test our algorithms first in simulations and then in experiments, testing the robustness of the approach to practical conditions, such as deformations of the environment, mismatches in the dynamical model of the robot, and measurements acquired with a camera with a limited field of view.
\end{abstract}

\section{INTRODUCTION}
The problem of motion planning from an initial state toward a goal state has received great attention in mobile robotics. One of the currently most popular techniques for solving this problem is represented by sampling-based algorithms, where the planner is not given an explicit representation of the environment (e.g., via polygons), but instead uses a \emph{sampling function} that can be used to query whether an arbitrary point is in free space or inside an obstacle. Together with a \emph{steering function} that can find trajectories between samples, algorithms such as \rrt{} and derivatives build a \emph{tree} that is rooted at the goal location and that extends toward every reachable point in the free space. When such tree arrives at a given starting location, a nominal feasible path can be found by tracing it back along the tree to the root.
However, in practice, following this path requires full knowledge of the position in the environment and a lower-level control to compensate for disturbances. Additionally, although the sampling process naturally reveals information about the obstacles via sample collisions, such information is typically discarded after planning; relatedly, traditional approaches do not address the fact that there might be discrepancies between the implicit map given by the sampling function and the real free configuration space.

In this paper we take advantage of the capabilities of \rrt-like algorithms to effectively represent the free configuration space, but augment it with linear output-feedback controllers that guide the state along the edges of the tree graph based on the observation of \emph{landmarks}, points whose location is known in the map and that can be easily recognized (but can be generally distinct from the obstacles or generated samples). Our output-feedback controllers provide remedies to the three aforementioned shortcomings of traditional methods:
\begin{enumerate*}
\item it enables us to simplify the tree representation (i.e., reduce the number of nodes) while also extending it to regions that were not explicitly sampled;
\item it steers clear of obstacles (within the resolution limits given by the finite sampling) by explicitly avoiding samples that were found in collision; and
\item it provides robustness to discrepancies in the map used for the planning that are reflected in the landmarks (if the actual landmark locations are somewhat different, the resulting control will change accordingly and without replanning).
\end{enumerate*}

\subsection{Review of prior work}
Sampling-based planning algorithms, such as Probabilistic Road Map \cite{kavraki1996probabilistic}, Rapidly exploring Random Tree (\rrt) \cite{rrt,lavalle2001randomized} and asymptotically optimal Rapidly Exploring Random Tree (\rrtstar{}), \cite{karaman2011sampling}, have become popular in the last few years due to their good practical performance, and their probabilistic completeness \cite{lavalle2006planning,lavalle2001randomized,karaman2011sampling}. There have also been extensions considering perception uncertainty \cite{renganathan2020towards}. However, these algorithms only provide nominal paths, and assume that a separate low-level controller exists to generate collision-free trajectories at run time. For trajectory planning that takes into account non-trivial dynamical systems of the robot, kinodynamic \rrt{} \cite{lavalle2001randomized, lavalle2006planning} and closed-loop \rrt{} (\texttt{CL-RRT}, \cite{kuwata2008motion}) and \texttt{CL-RRT\#} grow the tree  by sampling control inputs and then propagating forward the nonlinear dynamics (with the optional use of stabilizing controllers and tree rewiring to approach optimality). Further in this line of work, there has been a relatively smaller amount of works on algorithms that focus on producing controllers as opposed to simple reference trajectories.
The \texttt{safeRRT} algorithm \cite{positiveInvariant,weiss2017motion} generates a closed-loop trajectory from an initial state to the desired goal by expanding a tree of local state-feedback controllers to maximize the volume of corresponding positive invariant sets while satisfying the input and output constraints. Based on the same idea and following the \rrt{} approach, the \texttt{LQR-tree} algorithm \cite{tedrake2009lqr} creates a tree by sampling over state space and stabilizes the tree with a linear quadratic regulator (LQR) feedback. With respect to the present paper, the common trait among all these works is the use of full state feedback (as opposed to output feedback).

Separately, the state-of-the-art method for synthesizing safe and stable control commands is represented by the combination of Control Barrier Functions (CBF) and Control Lyapunov Function (CLF) \cite{ames2014control,hsu2015control,borrmann2015control}; however, these approaches are in general not complete for complex environments (i.e., they might fail to reach the goal even when a feasible path exists).

In our algorithm, we use the min-max robust Linear Programming (LP) controller synthesis method from \cite{Mahroo}. However, that work assumes that a polyhedral convex cell decomposition of the environment is available, which greatly reduces the applicability of that method. Moreover, that work also does not test the resulting controllers in a real experimental setting.

\subsection{Proposed approach and contributions}
As mentioned above, at a high level, our approach first converts an implicit representation of the environment to a simplified tree graph via sampling, and then builds a sequence of linear output feedback controllers to generate piece-wise linear control laws for navigation. To build the tree, we use the \rrtstar{} algorithm with two modifications:
\begin{enumerate*}
\item we post-process the tree to minimize the number of nodes and decrease the overall path length of each branch, and
\item we do not discard the samples that are found to be in collision with obstacles.
\end{enumerate*}
In addition to the sample-based tree and collision samples, we assume that the environment includes a set of landmarks that the robot can sense, such that for any location in the free space at least one landmark is available (in practice, these landmarks could correspond to visual features on the surface of obstacles, although we do not place any restriction on their location).

We then propose a way to define convex cells around each node in the tree that ensure progress from that node to its parent via a CLF constraint, while using the samples found in collision to form a local convex approximation of the free space for obstacle avoidance via CBF constraints.
We apply the method of~\cite{Mahroo} to formulate a min-max robust Linear Program that synthesizes a controller for each cell which takes as inputs relative position measurements of the landmarks and outputs a control signal that respects and balances the stability constraint from the CLF, and the safety (collision avoidance) constraints from the CBF.~Additionally, we show how to easily recompute the controllers (online) to handle the case where subsets of landmarks are not visible (e.g., due to the the camera's limited field of view).
To summarize, the main contributions of this work are:
\begin{itemize}
\item Integrate high-level \rrtstar{} path planning, with low-level CLF-CBF LP control synthesis, thus allowing the method of \cite{Mahroo} to be applied to general environments for which a cell decomposition is not available;
\item Introduce a new algorithm to simplify and improve a tree generated by \rrtstar{} algorithm with a finite number of samples (which, in general, is not optimal).
\item Introduce a new method to  reformulate the controller to navigate with a limited field of view.
\item Obtain a method that, thanks to the use of visual features of the environment (landmarks) and output feedback controllers, automatically adapts \rrtstar{} solutions to deformations of the environment and deviations from the nominal path without replanning;
\item Implement the proposed algorithm in a real-world environment to validate the performance of our algorithm.
\end{itemize}


\section{BACKGROUND}\label{sec:background}
In this section, we review the CLF and CBF constraints, and the \rrtstar{} method in the context of our proposed work.
\subsection{System Dynamics}
We assume that the robot has control-affine dynamics of the form
\begin{equation}\label{sys1}
  \dot{x}=Ax+Bu,
\end{equation}
where $x \in \cX\subset\real{n}$ denotes the state, $u\in\cU\subset\real{m}{}$ is the system input, and $A\in\real{n\times n}$, $B\in\real{n\times m}$ define the linear dynamics of the system. We assume that the pair $(A,B)$ is controllable, and that $\cX_i$ and $\cU$ are polytopic,
\begin{align}\label{state_limits}
  \cX=\{x\mid A_{x}x\leq b_{x}\},&& \cU=\{u\mid A_{u}u\leq b_u\},
\end{align}
Note that, in our case, $\cX$ will be a convex cell centered around a sample in the tree (Section~\ref{sec:environment}).

\subsection{Tree graphs}
A graph is a tuple $(\cV,\cE)$ where $\cV$ represents a set of nodes and $\cE$ represents a set of edges. If $(i,j)\in \cE$, we say that $j$ is the parent of node $j$. An oriented tree $\cT$ is a graph where each node has exactly one parent, except for the \emph{root}, which has no parents. We refer to nodes without children as \emph{leaves}.
\subsection{Optimal Rapidly-Exploring Random Tree (\rrtstar{})}
\label{sec:rrtstar}
In this section we review \rrtstar{}, an algorithm which is typically used for single-query path planning, but that can also be used to build a representation of the free configuration space starting from a given root node (in this paper we use it for the latter purpose). The algorithm builds a tree $\cT$ and is summarized in Algorithm~\ref{alg:RRT*}, and its main functions are:
\begin{itemize}
\item \rrtfunc{RandomSample}: return a random sample from a uniform probability distribution in configuration space~$\cX$.
\item \rrtfunc{IsSampleCollision}: return \emph{True} if the given sample is in collision with an obstacle.
\item \rrtfunc{Nearest}: return the node in $\cV$ closest to the random sample $x_{\text{rand}}$.
\item \rrtfunc{Near}: return a set of nodes in $\cV$ within distance $r^*$ from the $x_{\text{rand}}$ where $r^*$ is defined as
  \begin{equation}
    r^* = \min (\gamma^*(\frac{\log{\abs{\cV}}}{\cV})^{\frac{1}{d+1}},\eta),
  \end{equation}
  $d$ is the dimension of the configuration space, $\eta$ is the constant in the definition of the \rrtfunc{Steering} function, $\gamma^* = 2((1+\frac{1}{d})\frac{A_{\text{free}}}{\pi})^{\frac{1}{d}}$, and $A_{\text{free}}$ represents the area of the free space.
\item \rrtfunc{Steering}: given two states $p$ and $p'$, and $\eta$, return a path from $p$ toward $p'$ with length $\eta$ if there is no collision between the path and the obstacles.
\item \rrtfunc{isEdgeCollision}: given two states $p$ and $p'$, return \emph{True} if there is no collision between the path that connects $p$ to $p'$ and the obstacles; note that in general this function is typically built by using \rrtfunc{IsSampleCollision}.
\item \rrtfunc{Rewire:} check if the cost of the nodes in $X_{\text{near}}$ is less through $x_{\text{new}}$ as compared to their older costs, then its parent is changed to $x_{\text{new}}$.
\end{itemize}
For this paper, the only modification to the original \rrtstar{} algorithm is represented by line~$\ref{al:store collision}$ in Algorithm~\ref{alg:RRT*}, which stores random samples that were found to be in collision with an obstacle in the list $\cV_{\text{collision}}$ instead of discarding them; this list is then returned by the algorithm and will be used to define the CBF constraints in our algorithm (see Section~\ref{sec:tree-CBF}).
In general, \rrtstar{} is guaranteed to be asymptotically complete and optimal, although these guarantees do not necessarily hold with a finite number of samples.

\begin{algorithm}[t]
  \scriptsize
  \caption{\rrtstar{}}
  \begin{algorithmic}[1]
    \State Input (Obstacle lists $\cO$, start point, max$_{\text{itr}}$, $\eta$)
    \State $\cV\gets$ start point, $\cE\gets \emptyset$, $\cV_{\text{collision}}\gets\emptyset$
    \For {$i=1,\hdots, \text{max}_{\text{itr}}$}
    \State $x_{\text{rand}}\leftarrow$ \rrtfunc{RandomSample}
    \If {\rrtfunc{IsSampleCollision($x_{\text{rand}}$)}}
    \label{al:store collision}\State Append $x_{\text{rand}}$ to $\cV_{\text{collision}}$ and break
    \EndIf
    \State   $x_{\text{nearest}}\leftarrow$ \rrtfunc{Nearest}$(G=(V,E),x_{\text{rand}})$
    \State   $x_{\text{new}}\leftarrow$ \rrtfunc{Steer}$(x_{\text{nearest}},x_{\text{rand}},\eta)$
    \If {\rrtfunc{isEdgeCollision}$(x_{\text{nearest}}, x_{\text{new}})$}
    \State  $X_{\text{near}}\leftarrow$ \rrtfunc{Near}$(G=(V,E),x_{\text{new}},r^{*})$
    \State $\cV\leftarrow \cV \cup {x_{\text{new}}}$
    \State $x_{\text{min}} \leftarrow x_{\text{nearest}}$
    \State $c_{\text{min}} \leftarrow$ cost$_{\text{nearest}}+\text{norm}(x_{\text{nearest}},x_{\text{new}})$
    \For{every $x_{\text{near}}\in X_{\text{near}}$}
    \If {\rrtfunc{isEdgeCollision} $\wedge$ cost$_{\text{near}}+\text{norm}(x_{\text{near}},x_{\text{new}})<c_{\text{min}}$}
    \State  $x_{\text{min}}\leftarrow x_{\text{near}}$
    \State  $c_{\text{min}} \leftarrow$ cost$_{\text{near}}+\text{norm}(x_{\text{near}},x_{\text{new}})$
    \EndIf
    \EndFor
    \State  $\cE\leftarrow \cE \cup{(x_{\text{min}},x_{\text{new}})}$
    \State $\cE\leftarrow$\rrtfunc{Rewire}$((\cV,\cE),X_{\text{near}},x_{\text{new}})$
    \EndIf
    \EndFor
    \State return $\cT =(\cV, \cE)$, $\cV_{\text{collision}}$.
  \end{algorithmic}
  \label{alg:RRT*}
\end{algorithm}

\begin{algorithm}[t]
  \scriptsize
  \caption{Rewire}
  \begin{algorithmic}[1]
    \For{every $x_{\text{near}}\in X_{\text{near}}$}
    \If {\rrtfunc{isEdgeCollision} $\wedge$ cost$_{\text{near}}+\text{norm}(x_{\text{near}},x_{\text{new}})<c_{\text{min}}$}
    \State  $x_{\text{parent}}\leftarrow$ \rrtfunc{Parent}($x_{\text{near}}$)
    \State  $\cE\leftarrow (\cE-\{x_{\text{parent}},x_{\text{near}}\}) \cup{(x_{\text{new}},x_{\text{near}})}$
    \EndIf
    \EndFor
    \State return $\cE$.
  \end{algorithmic}
  \label{alg:rewire}
\end{algorithm}

\subsection{Control Lyapunov and Barrier Functions (CLF, CBF)}\label{sec:ECBF}
In this section we review CLF and CBF constraints, which are differential inequalities that ensure stability and safety (set invariance) of a control signal $u$ with respect to the dynamics \eqref{sys1}. These constraints are defined. First, it is necessary to review the following.

\begin{definition}
  The Lie derivative of a differentiable function $h$ for the dynamics \eqref{sys1} with respect to the vector field $Ax$ is defined as $\cL_{Ax}h(x)=\frac{\partial h(x(t))}{\partial x}\transpose Ax$.
\end{definition}
Applying this definition to \eqref{sys1} we obtain
\begin{equation}\label{Lie}
  \dot{h}(x)=\cL_{Ax}h(x)+\cL_Bh(x)u.
\end{equation}

In this work, we assume that Lie derivatives of $h(x)$ of the first order are sufficient \cite{Isidori:book95} (i.e., $h(x)$ has relative degree 1 with respect to the dynamics \eqref{sys1}); however, the result could be extended to the higher relative degree, as discussed in \cite{Mahroo}.

We now pass to the definition of the differential constraints.
Consider a continuously differentiable function $V(x):\cX\to\real{}$, $V(x)\geq 0$ for all $x\in\cX$, with $V(x)=0$ for some $x\in\cX$.
\begin{definition}\label{def:ECLF}
  The function $V(x)$ is a \textit{Control Lyapunov Function} (CLF) with respect to \eqref{sys1} if there exists positive constants $c_1,c_2,c_v$ and control inputs $u\in \cU$ such that
  \begin{equation}\label{cons:clf}
    \begin{aligned}
      \cL_AV(x)+\cL_BV(x)u+c_v V(x)\leq 0,\forall x \in \cX.
    \end{aligned}
  \end{equation}
  Furthermore, \eqref{cons:clf} implies that $\lim_{t\to\infty}V(x(t))=0$.
\end{definition}
Consider a continuously differentiable function $h(x):\cX\to\real{}$ which defines a safe set $\cC$ such that
\begin{equation}\label{set_c}
  \begin{aligned}
    \cC&=\{x\in \real{n}|\;h(x)\geq0\},\\
    \partial \cC&=\{x\in \real{n}|\;h(x)=0\},\\
    {Int}(\cC)&=\{x\in \real{n}|\;h(x)>0\}.
  \end{aligned}
\end{equation}
In our setting, the set $\cC$ will represent a convex local approximation of the free configuration space (in the sense that $x\in\cC$ does not contain any sample that was found to be in collision).
We say that the set $\cC$ is \emph{forward invariant} (also said \emph{positive invariant} \cite{positiveInvariant}) if  $x(t_0) \in \cC$ implies $x(t)\in \cC$, for all $t\geq 0$~\cite{zcbf1}.
\begin{definition}[CBF, \cite{nguyen2016exponential}]\label{def:ECBF}
  The function $h(x)$ is a \emph{Control Barrier Function} with respect to \eqref{sys1} if there exists a positive constant $c_h$, control inputs $u\in \cU$, and a set $\cC$ such that
  \begin{equation}\label{cons:cbf}
    \cL_{Ax}h(x)+\cL_Bh(x)u+c_hh(x)\geq 0,\forall x \in \cC.
  \end{equation}
  Furthermore, \eqref{cons:cbf} implies that the set $\cC$ is forward invariant.
\end{definition}


\subsection{Environment}\label{sec:environment}
As mentioned in the previous section, the environment is implicitly represented by the \rrtfunc{Sample} function.
Additionally, we assume that the robot can measure the displacement $\hat{l}_i-x$ between its position $x$ and its set of \emph{landmarks} $\hat{l}_i$. The location of $\hat{l}_i$ is assumed to be known and fixed in the environment.
Note that the landmarks $\hat{l}_i$, from the point of view of our algorithms, can be arbitrary as long as there is at least one landmark visible from any point $x$ in the free configuration space. The landmarks do not need to be chosen from the samples of the \rrtstar{} algorithm, or from the obstacles. Furthermore, we assume that the total extent of the environment is bounded by a convex polyhedron $\cX_{\text{env}}$ (e.g., simple box constraints).
\section{FEEDBACK CONTROL PLANNING VIA \rrtstar}\label{sec:problem-setup}
At a high level, our algorithm first divides the configuration space into cells according to a tree-graph representation of the environment, and then computes a controller for each cell that can be used to move the robot along the tree starting from any initial location. More in detail, our solution is comprised of the following steps:
\begin{enumerate}
\item Run \rrtstar, and then simplify the generated tree.
\item Define a convex cell around every node of the tree while taking into account the position of its parent.
\item Define the CLF and CBF constraints for each cell.
\item Use a robust LP formulation to compute a controller for each cell that respects the CLF and CBF constraints.
\item Reformulate the controller in terms of visible and invisible landmarks for the limited field of view of the robot.
\end{enumerate}
Below we give the details of each one of the steps.
\begin{figure}[t]
  \centering
  \subfloat[Tree generated by \rrtstar{}]{\label{fig:tree-1}{\includegraphics[width=4cm]{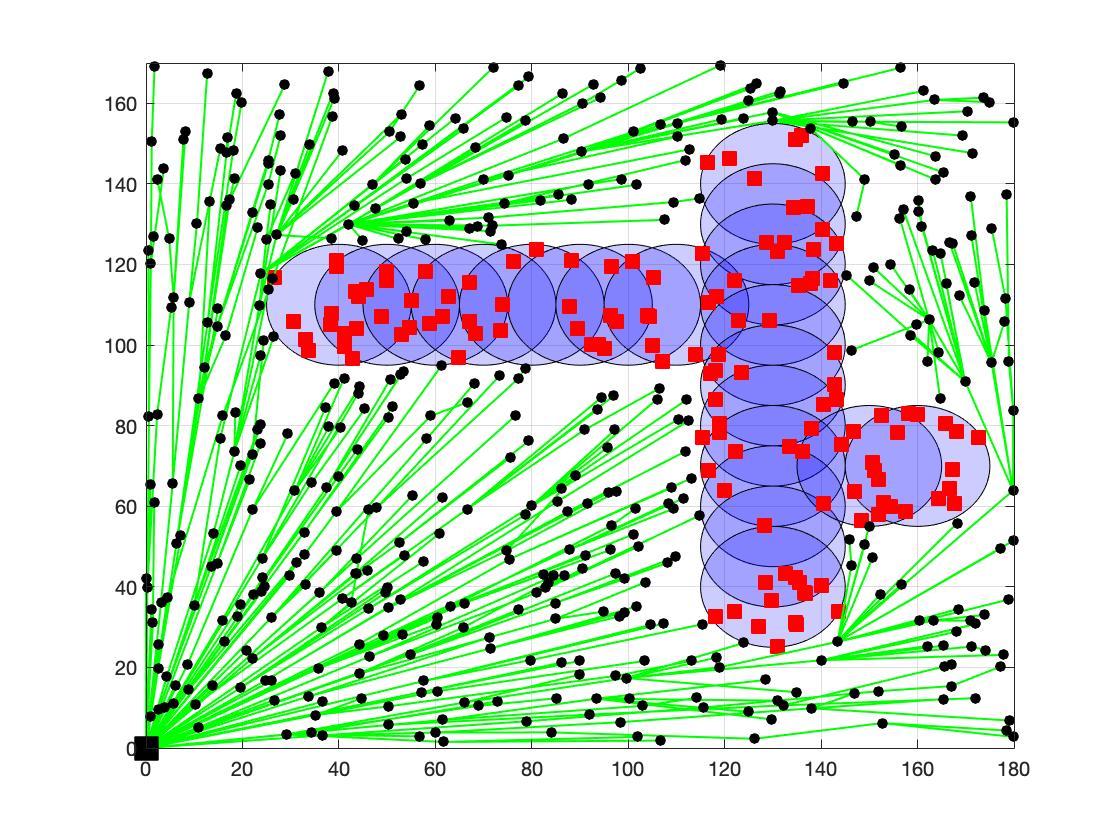}}}
  \subfloat[Simplified tree]{{\label{fig:tree-2}\includegraphics[width=4cm]{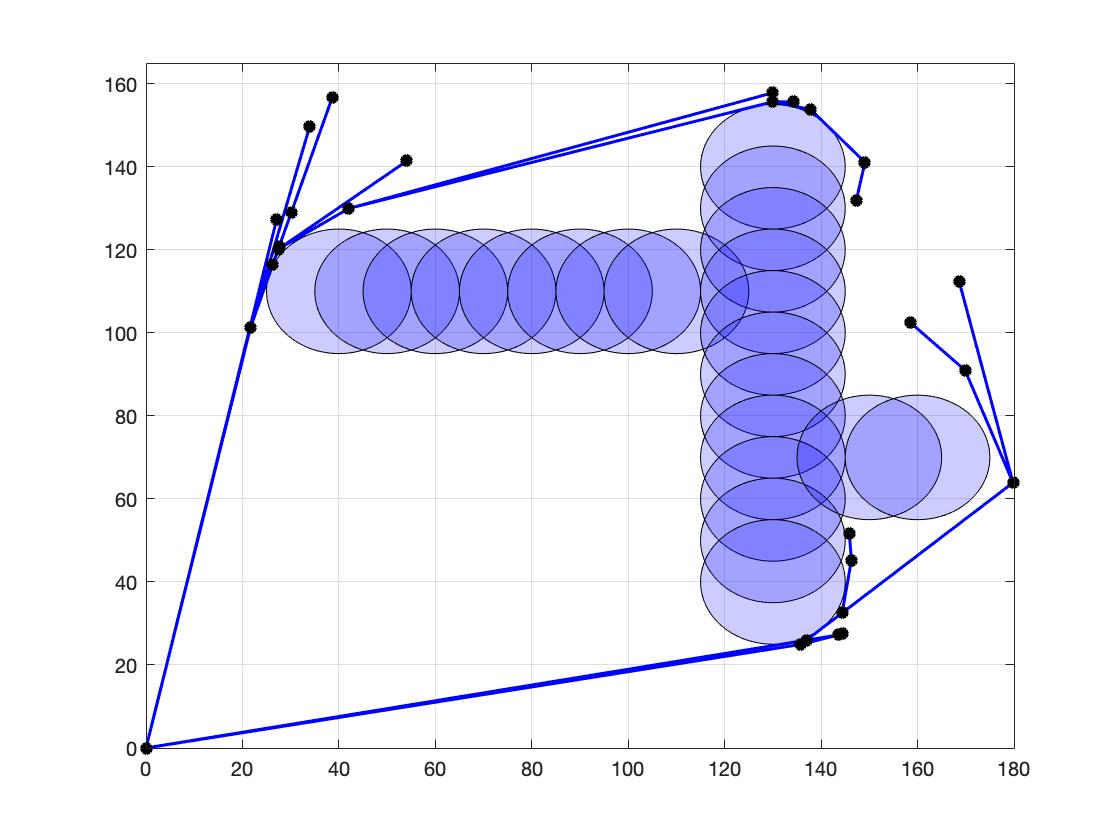}}}%

  \caption{Obstacles are represented by blue circles, and the start point of \rrtstar{} is located at the origin. In Fig.~\ref{fig:tree-1}, yellow dots show the samples in collision with the obstacles, and the generated tree from \rrtstar{} is plotted in green. Fig.~\ref{fig:tree-2} depicts the simplified tree following Section~\ref{sec:simplified-tree}.}
  \label{fig:tree}
\end{figure}
\subsection{Simplified Tree Graph}\label{sec:simplified-tree}
We start with a tree $\cT = (\cV,\cE)$ generated by the traditional \rrtstar{} algorithm from Section~\ref{sec:rrtstar}. Since the number of samples is finite, the generated tree is not asymptotically optimal but it has a large number of nodes. We simplify the tree such that the tree has less number of nodes while it keeps track of all samples in collision with obstacles
by following three steps:
\begin{enumerate}
\item\label{it:ppr} \emph{Post Processing Rewiring} (PPR, Algorithm~\ref{alg:rewiring}): similarly to $\mathtt{\theta^*}$ \cite{nash2007theta}, we examine each node starting from the root and using a breadth-first order, and use the function \rrtfunc{isEdgeCollision} to check if it can be connected to an ancestor (testing from the parent and then moving toward the root) without collisions and while lowering the path length.
\item Remove Crossing (RC, Algorithm~\ref{alg:rc}), if edge $(i,j)$ crosses edge $(p,q)$ with an intersection at point $k$,
  we add point $k$ to $\cV$, and edges $(i,k)$ and $(p,k)$ to $\cE$. Then, we compare the costs of reaching the start point from $k$ through edges $(k,j)$ and $(k,q)$, and add the smallest one to $\cE$ as the parent of node $k$. 
\item\label{it:ctl} Cutting the Leaves (CtL, Algorithm~\ref{alg:cutting}), for a node that has multiple leaves as children, we only keep a single leaf in the middle. 
\item We repeat steps \ref{it:ppr}-\ref{it:ctl} until there are no changes in $\cT$.

\end{enumerate}
Fig.~\ref{fig:tree} shows an example of the procedure, starting from the \rrtstar{} tree (Fig.~\ref{fig:tree-1}), and ending with the simplified tree after the three steps (Fig.~\ref{fig:tree-2}).

Note that as a consequence of the simplifying steps above, it is possible to connect each sample from the original \rrtstar{} to the simplified tree with a straight line, which suggests that the simplified tree will be a good roadmap representation \cite{Choset:book05} of the free configuration space reachable from the root.

\begin{algorithm}[t]
  \scriptsize
  \caption{Post Processing Rewiring}
  \label{alg:rewiring}
  \begin{algorithmic}[1]
    \State Input ($\cT=(\cV,\cE)$)
    \For {$i=1,\hdots,\abs{V}$}
    \While {$\exists$ \rrtfunc{Parent}(node) $\wedge$ $\exists$ \rrtfunc{Parent(Parent}(node)) $\wedge$ \rrtfunc{IsEdgeCollision}(node$_i$,\rrtfunc{Parent(Parent}(node$_i$))}
    \State $\cE\leftarrow \cE-$\{node$_i$,\rrtfunc{Parent}(node$_i$)\} $\cup  $ {node$_i$,\rrtfunc{Parent(Parent}(node$_i$))}
    \EndWhile
    \EndFor
    \State return $\cT=(\cV,\cE)$
  \end{algorithmic}
\end{algorithm}

\begin{algorithm}[t]
  \scriptsize
  \caption{Remove Crossing}
  \label{alg:cutting}
  \begin{algorithmic}[1]
    \State Input ($\cT=(\cV,\cE)$)
    \For {for every $(i,j) \in \cE$}
    \For{for every $(p,q) \in \cE$}
    \If{edge $(i,j)$ and $(p,q)$ have an intersection}
    \State $k = $intersection $(i,j)$ and $(p,q)$
    \State  \rrtfunc{Parent}$(i)\leftarrow k$
    \State  \rrtfunc{Parent}$(p)\leftarrow k$
    \State $\cE \leftarrow \cE-\{(i,j),(p,q)\}$
    \State $\cE \leftarrow \cE\cup{\{(i,k),(p,k)\}}$
    \State $\text{cost}_j \leftarrow$ reach start point through node $j$
    \State $\text{cost}_q \leftarrow$ reach start point through node $q$
    \If{$\text{cost}_j \leq \text{cost}_q$}
    \State $\cE \leftarrow \cE\cup{\{(k,j)\}}$
    \EndIf
    \If{$\text{cost}_q \leq \text{cost}_j$}
    \State $\cE \leftarrow \cE\cup{\{(k,q)\}}$
    \EndIf
    \EndIf
    \EndFor
    \EndFor
    \State return  $\cT=(\cV,\cE)$
  \end{algorithmic}
  \label{alg:rc}
\end{algorithm}

\begin{algorithm}[t]
  \scriptsize
  \caption{Cutting the Leaves}
  \label{alg:cutting}
  \begin{algorithmic}[1]
    \State Input ($T=(V,E)$)
    \For {$i=1,\hdots,\abs{V}$}
    \For{leaves $j$ connected to $i$}
    \State angles$_i$ $\leftarrow$ angles$_i$ $\cup$ \rrtfunc{Angle} =$(i,j)$
    \EndFor
    \State $j_{largest}$ $\leftarrow$ \rrtfunc{LargestAngle}(angles$_i$)
    \State $j_{smallest}$ $\leftarrow$ \rrtfunc{SmallestAngle}(angles$_i$)
    \State $E_s \leftarrow E_s \cup \{i,j_{largest}\}\cup \{i,j_{smallest}\}$
    \State $V_s \leftarrow V_s \cup \{j_{largest}\}\cup \{j_{smallest}\}$
    \EndFor
    \State return $T_s=(V_s,E_s)$
  \end{algorithmic}
  \label{alg:ppr}
\end{algorithm}
\subsection{Environment Constraints}\label{tree-X}
For each edge $\ijE$ in the tree, we define a cell $\cX_{ij}$~as
\begin{equation}
  \cX_{ij} = \cX_{V_i} \cap \cX_{P_i} \cap \cX_{\text{env}}, \label{Xi}
\end{equation}
where 
\begin{subequations}\label{state_convex}
  \begin{align}
    &
   \small{\cX_{V_{ij}} =\{x| (x-x_i)\transpose\frac{x_k-x_i}{\norm{x_k-x_i}}\leq\frac{\norm{x_k-x_i}}{2}, k\in\cV\setminus\{j\}\} \label{Xvi}} \\
    & \small {\cX_{P_{ij}} =\{x| (x-x_i)\transpose\frac{x_j-x_i}{\norm{x_j-x_i}}\leq \norm{x_j-x_i}, i,j\in\cV\}},
      \label{Xij}
 \end{align}
\end{subequations}
and $\norm{x_k-x_i}$ and $x \in\real{n}$ is the Euclidean distance between nodes $i,k\in\cV$; the polyhedron $\cX_{ij}$ is similar to a Voronoi region \cite{latombe2012robot}, and is defined by a set of perpendicular bisector of segments $i,k$ for $k\in \cV\setminus\{j\}$, and by the line perpendicular to $i,j$ passing through $j$. The inequalities in \eqref{Xvi}-\eqref{Xij} can be written in matrix form of \eqref{state_limits}. Note that $\cX_{ij}$ contains all the points that are closest to $i$ than other vertices in $\cT$, but it also includes the parent $j$; we empirically noticed that with the latter modification we obtained more robust results. An example of $\cX_{ij}$ is shown in Fig.~\ref{fig:CBF-X}
\subsection{Stability by CLF}\label{sec:tree-CLF}
To stabilize the navigation along an edges of a tree, we define the Lyapunov function $V_{ij}(x)$ as
\begin{equation}\label{V}
  V_{ij}(x)=z_{ij}^T (x-x_{j}),
\end{equation}
where $z_{ij}\in \real n$ is the  \text{\emph{exit direction}} for edge $(i,j)$, $x_{j} \in \text{\emph{exit face}}$ is the position of the parent of node $i$, and $V_{ij}(x)$ reaches its minimum $V(x)=0$ at $x_j$. Note that the Lyapunov function represents, up to a constant, the distance $d(x,x_j)$ between the current system position and the exit face. By Definition~\ref{def:ECLF}, $V_{ij}(x)$ is a CLF.
\begin{definition}\label{exitface}
 For $\cX_{ij}$, we define the exit direction as $z_{ij}= \frac{x_j-x_i}{\norm{x_j-x_i}}$ that is a unit vector from node $j$ towards node $i$ where $j\in\cV_s$ is the parent of node $i$.
\end{definition}

\subsection{Safety by CBF}\label{sec:tree-CBF}
In this section, we define barrier functions $h_{ij}(x)$ that defines a cone representing a local convex approximation of the free space between $i$ and $j$, in the sense that it excludes all samples in $\cV$ that are on the way from $i$ to $j$. In particular, we use the following steps (we consider only the 2-D case, although similar ideas could be applied to the 3-D case):
\begin{enumerate}
\item Define set $\cO_{ij}\subset\cV_{\text{collision}}$ whose projection falls on the segment $i,j$, i.e., 
  \begin{equation}
    \cO_{ij}=\{o\in\cV_\text{collision}\mid0\leq\proj_{ij}(o)\leq 1\}
  \end{equation}
  where
  \begin{equation}
    \proj_{ij}(o)=\frac{(\vct{x}_i-\vct{x}_j)\transpose(\vct{x}_i-\vct{x}_o)}{\norm{\vct{x}_i-\vct{x}_j}}
  \end{equation}
  is the scalar projection of the vector $o$ onto the segment $i,j$.
\item From the set $\cO_{ij}$, we choose two samples such that
  \begin{equation}\label{o_cbf}
    \begin{aligned}
      & o_{u} =\min_{o\in\cO_{ij}}\{\vct{x}_o-\text{proj}^{io}_{ij} | \theta_{io}>0\} \\
      &  o_{d} = \min_{o\in\cO_{ij}}\{\vct{x}_o-\text{proj}^{io}_{ij} | \theta_{io}<0\}
    \end{aligned}
  \end{equation}
  where $\theta_{io}=\angle(i,j,o)$ is the oriented angle between edge $(i,j)$ and line $(x_o,x_i)$.
\item We write the equations of two lines passing through $\{i,o_{u}\}$ and $\{i,o_{d}\}$ in a matrix form using $A_{h_{i}}\in \real{2\times n}$, $b_h\in \real{2}$ to define the invariant set
  \begin{equation}
    \cC_{ij}=\{x\in\real{2}:A_{h_{ij}}\vct{x}+b_{h_{i}}>0\}.
  \end{equation}
\end{enumerate}
The corresponding CBF is then defined as
\begin{equation}\label{h}
  h_{ij}(x)=A_{h_{ij}}x+b_{h_{ij}}.
\end{equation}
An example of the set $\cC_{ij}$  is shown in Fig.~\ref{fig:CBF-X}. Note that the region $\cC_{ij}$ might not include the entire cell $\cX_{ij}$. However, the controller will be designed to satisfy the CBF and CLF constraints over the entire cell $\cX_{ij}$; in practice, this means that if the robot starts in the region $\cX_{ij}\setminus\cC_{ij}$, it will be driven toward the boundary of $\cC_{ij}$ (this is a consequence of the CBF constraint, and can be proved in a similar way as the original result \cite{ames2014control}).
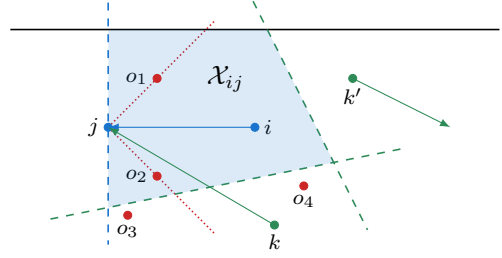
\begin{figure}[t]
  \centering
  \begin{tikzpicture}[scale=1.3]
  \tkzDefPoints{0/0/j, 1.5/0/i}

  \tkzDefPoints{0.5/0.5/o_1,0.5/-0.5/o_2,0.2/-0.9/o_3,2/-0.6/o_4}

  \tkzDefPoints{1.7/-1/k,2.5/0.5/k',3.5/0/k'p}

  \tkzLabelPoints[font=\footnotesize,below](o_4,o_3,k,k')
  \tkzLabelPoints[font=\footnotesize,left](o_1,o_2,j)
  \tkzLabelPoints[font=\footnotesize,right](i)

  \tkzDefPoints{-1/1/b_1,4/1/b_2}
  \tkzDrawSegment[black](b_1,b_2)

  \tkzSetUpLine[style=dashed]
  \tkzDefLine[mediator](i,k')
  \tkzGetPoints{ik'u}{ik'd}
  \tkzDefPointWith(ik'u,ik'd)
  \tkzDrawSegment[SeaGreen4,add=0.1 and 0.25](ik'u,ik'd)

  \tkzDefLine[mediator](i,k)
  \tkzGetPoints{iku}{ikd}
  \tkzDefPointWith(iku,ikd)
  \tkzDrawSegment[SeaGreen4,add=0.3 and 0.8](iku,ikd)

  \tkzDefLine[perpendicular=through j](j,i)
  \tkzGetPoint{jiu}
  \tkzDrawSegments[DodgerBlue3,add=-0.15 and 0.8](jiu,j)

  \tkzSetUpLine[color=Firebrick3,style=densely dotted,add=0 and 1.2]
  \tkzDrawLine(j,o_1)
  \tkzDrawLine(j,o_2)

  \tkzInterLL(jiu,j)(b_1,b_2)
  \tkzGetPoint{jib_1b_2}

  \tkzInterLL(b_1,b_2)(ik'u,ik'd)
  \tkzGetPoint{b_1b_2ik'}

  \tkzInterLL(ik'u,ik'd)(iku,ikd)
  \tkzGetPoint{jikk'}

  \tkzInterLL(jiu,j)(iku,ikd)
  \tkzGetPoint{jik}

  \fill[DodgerBlue3,opacity=0.15,very thick] (jib_1b_2) -- (b_1b_2ik') -- (jikk') -- (jik) -- cycle;

  \tkzDrawPoints[Firebrick3](o_1,o_2,o_3,o_4)
  \tkzDrawPoints[SeaGreen4](k,k')
  \tkzDrawPoints[DodgerBlue3](j,i)
  \draw[-latex,DodgerBlue3] (i) -- (j);
  \begin{scope}[SeaGreen4,-latex]
    \draw (k) -- (j);
    \draw (k') -- (k'p);
  \end{scope}

  \node at (1.2,0.5) {$\cX_{ij}$};
\end{tikzpicture}

  \caption{The Voronoi-like region $\cX_{ij}$ for edge $(i,j)$, and the corresponding CBF constraints; green points and arrows: other vertices and edges in the tree $\cT$; red points: collision samples in $\cV_s$; black line: $\cX_{\text{env}}$; green dashed lines: $\cX_{V_{ij}}$ and $\cX_{P_{ij}}$; shaded region: $\cX_{ij}$; red dotted lines: CBF constraints $h_i$. For the CBF constraints, note that the projection of vector $(i,o_4)$ onto edge $(i,j)$ does not lie  between node $i$ and node $j$, hence $o_4$ does not support a constraint; moreover, from \eqref{o_cbf}, $o_d = o_1$ and $o_u= o_2$ are the closet points (in angle) to the edge $(i,j)$ and are on two sides of the edge $(i,j)$.
}\label{fig:CBF-X}
\end{figure}
\subsection{Controller}\label{sec:tree-controller}

\newcommand{\xpos}{x_{\textrm{pos}}}
We assume that the robot can only measure the relative displacements between the robot's position $x$ and the landmarks in the environment, which corresponds to the output function
\begin{equation}\label{vec-landmarks}
  y=(L-x\vct{1}\transpose)^\vee=L^\vee-\cI x=\stack{(l_i- x)},
\end{equation}
where $L_\in\real{n\times n_l}$ is a matrix of landmark locations, $i=1,\hdots,n_l$ that $n_l$ is the number of landmarks,  $A^\vee$ represents the vectorized version of a matrix $A$, $\cI=\vct{1}_{nl} \otimes I_n$, and $\otimes$ is the Kronecker product. Our goal is to find a feedback controller of the form
\begin{equation}\label{u}
  u_{ij}(K)=K_{ij}y,
\end{equation}
where $K_{ij} \in \real{m\times nn_l}$ are the feedback gains that need to be found for each cell $\cX_{ij}$. Intuitively, a controller of the form \eqref{u} corresponds to a control command that is a weighted linear combination of the measured displacements $y$. The goal is to design $u(y)$ such that the system is driven toward the exit direction $z_{ij}$ while avoiding obstacles. Note that, to define a controller for edge $(i,j)$, the landmarks do not necessarily need to belong to $\cX_{ij}$, and, in general, each cell could use a different set of landmarks (we explore this direction further in Section~\ref{sec:limited-field-of-view}). 

Following the approach of \cite{Mahroo}, and using the CLF-CBF constraints reviewed in Section~\ref{sec:background}, we encode our goal in the following feasibility problem:

\begin{equation}\label{findK}
  \begin{aligned}
    & \textrm{find} \;\;{K_{ij}}\\
    & \textrm{subject to}:\\
    &\{\text{CBF:}\;-( \cL_{A_ix}h_{ij}(x)+\cL_Bh_{ij}(x)u+c\transpose_hh_{ij}(x))\}\leq 0,\\
    & \{\text{CLF:}\quad\cL_{A_ix}V_{ij}(x)+\cL_BV_{ij}(x)u+c\transpose_v{V_{ij}}(x)\}\leq 0,\\
    &u\in\cU,\;\;\forall x\in \cX_{ij},\;\;(i,j)\in\cE.
  \end{aligned}
\end{equation}

In practice, we aim to find a controller that satisfies the constraints in \eqref{findK} with some margin. Note that the constraints in \eqref{findK} need to be satisfied for all $x$ in the region $\cX_{ij}$, i.e., the same control gains should satisfy the CLF-CBF constraints at every point in the region. We handle this type of constraint by rewriting \eqref{findK} using a min-max formulation~\cite{Mahroo}, arriving to the following robust optimization problem:
\begin{equation}\label{opt_margin}
\centering
  \begin{aligned}
    &\min_{K_{ij},S_{V_{ij}},S_{h_{ij}}}\;w_{h_{ij}}\transpose S_{h_{ij}}+w_{V_{ij}}S_{V_{ij}}\\
    &\textrm{\subjectto}\\
    &\max_x \text{CBF}\leq S_{h_{ij}} ,\\
    &\max_x \text{CLF}\leq S_{V_{ij}},\\
    & S_{V_{ij}},S_{h_{ij}} \leq 0,\;\;u\in\cU,\;\; \forall x\in \cX_{ij},\;\; (i,j)\in\cE.
  \end{aligned}
\end{equation}
From \eqref{Lie}, the Lie derivatives of $h_{ij}(x)$ and $V_{ij}(x)$ are written as:
\begin{equation}\label{cons-Lie}
  \begin{aligned}
    & \dot{h}_{ij}(x)=A_{hij}\dot{x} =A_{hij}(A x+B u),\\
    & \dot{V}_{ij}(x) =z_{ij}\transpose\dot{x}= z_{ij}\transpose (Ax+Bu).
  \end{aligned}
\end{equation}
Combining \eqref{vec-landmarks}, \eqref{u}, and \eqref{cons-Lie} with \eqref{sys1}, the constraints in \eqref{opt_margin} can be rewritten as:
\begin{equation}\label{primal-cbf}
  \begin{aligned}
    & \text{CBF constraint:}\\
    &\begin{bmatrix}
      \underset{x}{\max}-(A_{hij}A-A_{hij}B K_{ij}\cI+{c_h}A_{hij})x\\
      \subjectto\;\;A_{xij} x \leq b_{xij}\\
    \end{bmatrix}
    \leq \\&
    \quad \quad  \quad \quad \quad \quad \quad  \quad \quad  \quad \quad
    S_{h_{ij}}+c_bb_{hij}+{A_{hij}}B K_{ij}L_{ij}^\vee,\\
  \end{aligned}
\end{equation}
\begin{equation}\label{primal-clf}
  \begin{aligned}
    & \text{CLF constraint:}\\
    &\begin{bmatrix}\underset{x}{\max}(z_{ij}\transpose A-z_{ij}\transpose BK_{ij}\cI+{c_v}z_{ij}\transpose )x \\
      \subjectto\;\;A_{xij} x \leq b_{xij}

    \end{bmatrix}
    \leq \\&
    \quad \quad  \quad \quad \quad \quad \quad  \quad \quad  \quad \quad S_{Vij}+c_vz_{ij}\transpose x_{j}-z_{ij}^T B K_{ij}L_{ij}^\vee,\\
  \end{aligned}
\end{equation}
Constraints in \eqref{primal-cbf}, \eqref{primal-clf} are linear in terms of variable $x$, so we can write dual forms of the constraints as
\begin{equation}\label{dual-conscbf}
  \begin{aligned}
    & \text{CBF dual constraint:}\\
    &\begin{bmatrix}
      &\min_{\lambda_{bj}} \lambda_{bij}\transpose b_{xij} \\
      &\subjectto\\
      & A_{xij} \transpose\lambda_{bij}=(-A_{hij}A+A_{hij}B K_{ij}\cI-{c_h}A_{hij})\transpose\\
      & \lambda_{bij}\geq 0,
    \end{bmatrix} \leq\\& \quad  \quad \quad  \quad \quad  \quad \quad  \quad \quad \quad
    S_{hij}+c_hb_{hij}+{A_{hij}}B_{ij} K_{ij}L_{ij}^\vee,
  \end{aligned}
\end{equation}
\begin{equation}\label{dual-consclf}
  \begin{aligned}
    & \text{CLF dual constraint:}\\
    &\begin{bmatrix}
      &\min_{\lambda_l}\lambda_{lij}  \transpose b_{xij}\\
      &\subjectto\\
      &  A_{xij}\transpose \lambda_{lij}
      =(z_{ij}\transpose A-z_{ij}\transpose BK_{ij}\cI+{c_v}z_{ij}\transpose )\transpose\\
      & \lambda_{lij} \geq 0
    \end{bmatrix}\leq \\& \quad \quad  \quad \quad  \quad \quad  \quad \quad  \quad \quad \quad S_{Vij}+c_vz_{ij}\transpose x_{j}-z_{ij}^TB K_{ij}L_{ij}^\vee,
  \end{aligned}
\end{equation}
Consequently, \eqref{opt_margin} with the dual constraints becomes:
\begin{equation}\label{opt-dual}
  \begin{aligned}
    &\min_{K,S_{V},S_{h}}\;w_{h}\transpose S_{h}+w_{V}S_{V}\\
    &\textrm{subject to}:\\
    &\text{CBF dual constraint} ,\\
    &\text{CLF dual constraint},\\
    & S_h,S_V \leq 0,
  \end{aligned}
\end{equation}
From \cite[Lemma~1]{Mahroo}, the optimization problem \eqref{opt_margin} is equivalent to \eqref{opt-dual}. 

Staring from a point $x\in\cX_{ij}$, $u_{ij}$ drives the robot toward $x_j$, the robot switches its controller to  $u_{jq}$ when $\norm{x-x_j}\leq\epsilon$ where node $q$ is the parent of node $j$. 
\subsection{Control With the Limited Field of View}\label{sec:limited-field-of-view}
In the formulation above and in the work of \cite{Mahroo}, it is implicitly assumed that the controller has access to all the landmarks measurements at all times. However, in practice, a robot will only be able to detect a subset of the landmarks due to a limited field of view or environment occlusions. To tackle this issue, we show in this section that the controller $u$ \eqref{u} can be designed using multiple landmarks (as in the preceding section), but then computed using a single landmark.
\begin{proposition} \label{prop:limited field view}
  Let $K=\bmat{K_1,\cdots,K_i,\cdots, K_l}$ be a partition of the controller matrix conformal with $L^\vee$. Given an arbitrary landmark $\hat{l}_i$ (column of $L$), the controller \eqref{u} can be equivalently written as
  \begin{equation}\label{u_new}
    u = \sum_jK_jy_i+k_{\text{bias},i}
  \end{equation}
  where $k_{\text{bias},i}\in\real{n}$ is a constant vector given by
  \begin{equation}
    k_{\text{bias},i}=\sum_{j\neq i}K_j(\hat{l}_j-\hat{l}_i)
  \end{equation}
\end{proposition}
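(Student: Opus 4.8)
The plan is to prove the identity by a direct algebraic rearrangement of the controller \eqref{u}, exploiting the fact that the individual landmark measurements differ from one another only by constant, known offsets. First I would expand the product $u = Ky$ using the given partition: since $K = \bmat{K_1, \cdots, K_i, \cdots, K_l}$ is conformal with $L^\vee$ and the output is the stacked vector $y = \stack{(\hat{l}_j - x)}$, block multiplication gives $u = \sum_j K_j(\hat{l}_j - x)$, where each $K_j \in \real{m\times n}$ acts on the displacement $y_j = \hat{l}_j - x$ to the $j$-th landmark.

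The key step is to re-express every displacement relative to the single chosen landmark $\hat{l}_i$. Adding and subtracting $\hat{l}_i$ yields $\hat{l}_j - x = (\hat{l}_j - \hat{l}_i) + (\hat{l}_i - x) = (\hat{l}_j - \hat{l}_i) + y_i$, where $y_i = \hat{l}_i - x$ is the measurement of the chosen landmark alone. Substituting this into the sum and splitting the linear terms gives $u = \sum_j K_j y_i + \sum_j K_j(\hat{l}_j - \hat{l}_i)$, in which $y_i$ no longer depends on the summation index in the first term, so that term is simply $(\sum_j K_j) y_i$.

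Finally I would simplify the second sum: its $j = i$ summand vanishes because $\hat{l}_i - \hat{l}_i = 0$, so it reduces to $\sum_{j\neq i} K_j(\hat{l}_j - \hat{l}_i)$, which is exactly the constant vector $k_{\text{bias},i}$. Collecting terms then reproduces the claimed form \eqref{u_new}. I expect no genuine obstacle here, since the result is a one-line consequence of linearity; the only point worth stating explicitly is that the offsets $\hat{l}_j - \hat{l}_i$ are fixed and known (the landmark locations are assumed known and stationary), so that $k_{\text{bias},i}$ can be precomputed offline and only the single live measurement $y_i$ is required at runtime, which is the practically relevant content of the statement.
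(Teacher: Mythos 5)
Your proof is correct and follows essentially the same route as the paper's: block-expand $u = Ky$ as $\sum_j K_j(\hat{l}_j - x)$, add and subtract the single measurement $y_i = \hat{l}_i - x$, and absorb the constant, known offsets into $k_{\text{bias},i}$, dropping the vanishing $j=i$ summand. Your bookkeeping is in fact slightly more careful than the paper's own displayed intermediate step \eqref{u_limited}, which carries a sign typo (it writes $\sum_j K_j(\hat{l}_i - \hat{l}_j)$ where $\sum_j K_j(\hat{l}_j - \hat{l}_i)$ is needed to match the stated $k_{\text{bias},i} = \sum_{j\neq i} K_j(\hat{l}_j - \hat{l}_i)$), so your version is the one consistent with the proposition as stated.
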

\begin{proof}
  Using the conformal partition of $K$, we can expand \eqref{u} as
  \begin{equation}\label{decomU}
    u = \sum_jK_j(\hat{l}_j-x)
  \end{equation}
  Adding and subtracting $\sum_j K_j y_i$ and reordering, we have
  \begin{equation}\label{u_limited}
    u = 
    \sum_j K_j (\hat{l}_i-x)+\sum_j K_j(\hat{l}_i-\hat{l}_j),
  \end{equation}
  from which the claim follows.
\end{proof}
Using the fact that the global positions of the landmarks are known during planning, our new Proposition~\ref{prop:limited field view} shows that it is possible to implement the controller $u$ by measuring a single displacement $y_i$; moreover, since the original controller \eqref{u} is smooth, one can also switch among different landmarks without introducing discontinuities in the control. Although we stated our result for a single landmark, it is possible to prove a similar claim for any subset of landmarks.
\begin{figure}[b]
  \centering
  \subfloat[Original environment]{\label{fig:Matlab-org}{\includegraphics[width=4cm]{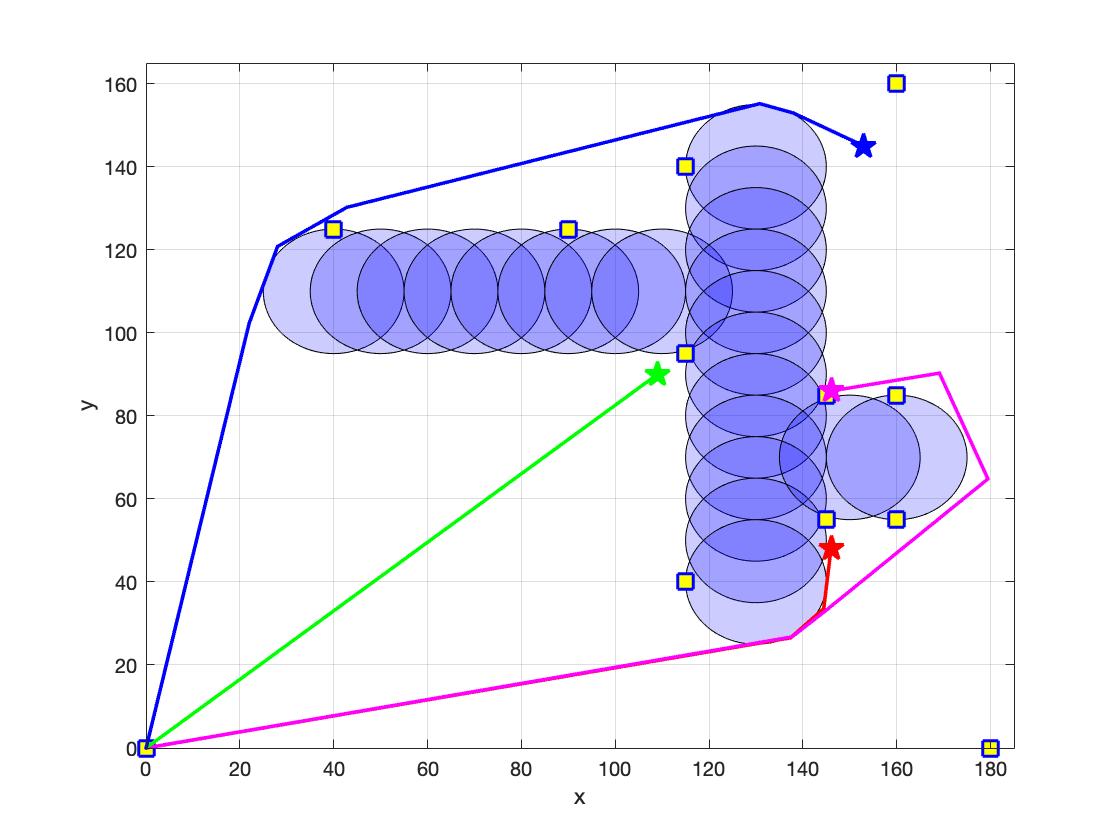} }}
  \subfloat[Deformed environment]{{\label{fig:Matlab-rotate}\includegraphics[width=4cm]{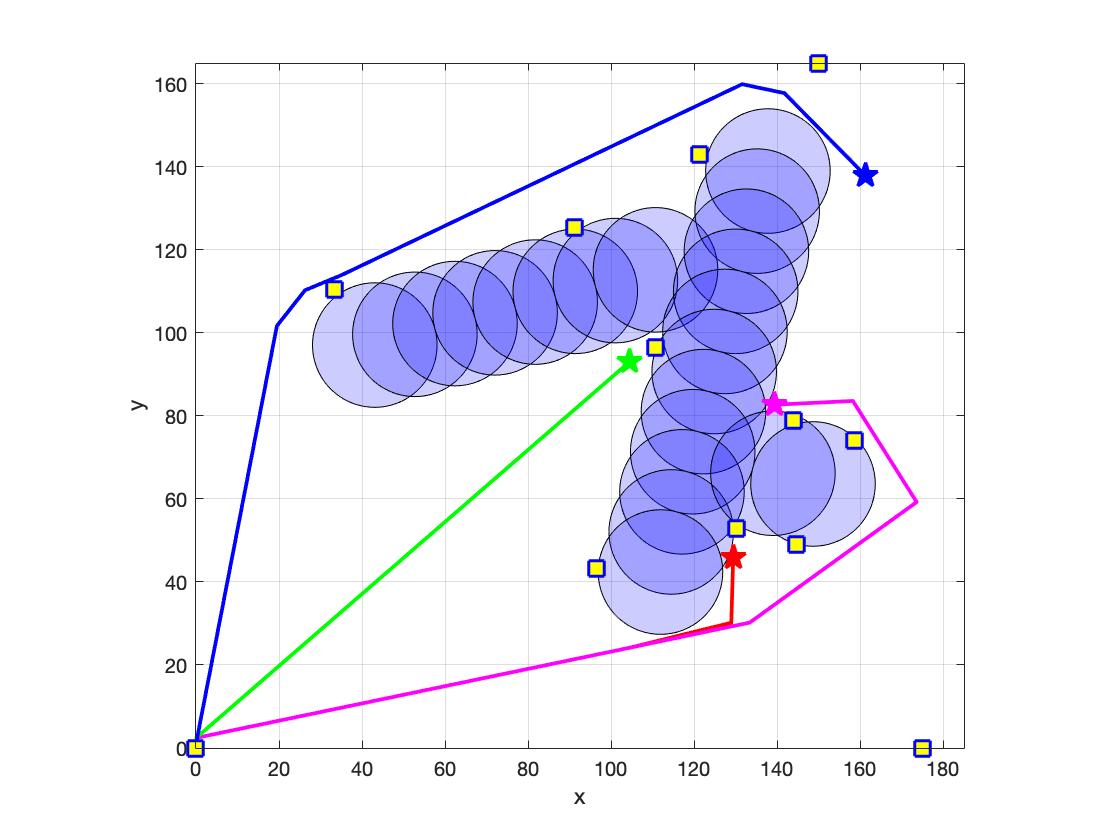} }}%
  \caption{Simulated trajectories for various start points for the original environment (Fig.~\ref{fig:Matlab-org}) and for a deformed version of the same environment (Fig.~\ref{fig:Matlab-rotate}). The start points for each trajectory are represented by star markers.}
  \label{fig:Matlab-sim}
\end{figure}
\section{SIMULATION AND EXPERIMENTAL RESULTS}
To assess the effectiveness of the proposed algorithm, we run a set of validation using both MATLAB simulations and experiments using ROS on a Create 2 robot by iRobot~\cite{iCreate}.
While the optimization problem guarantees exponential convergence of the robot to the stabilization point, in these experiments the velocity control input $u$ has been normalized to achieve constant velocities along the edges of the trees.
\begin{figure}[t]
  \centering
  \subfloat[Creat 2 iRobot]{\label{fig:robot}{\includegraphics[width=3.5cm]{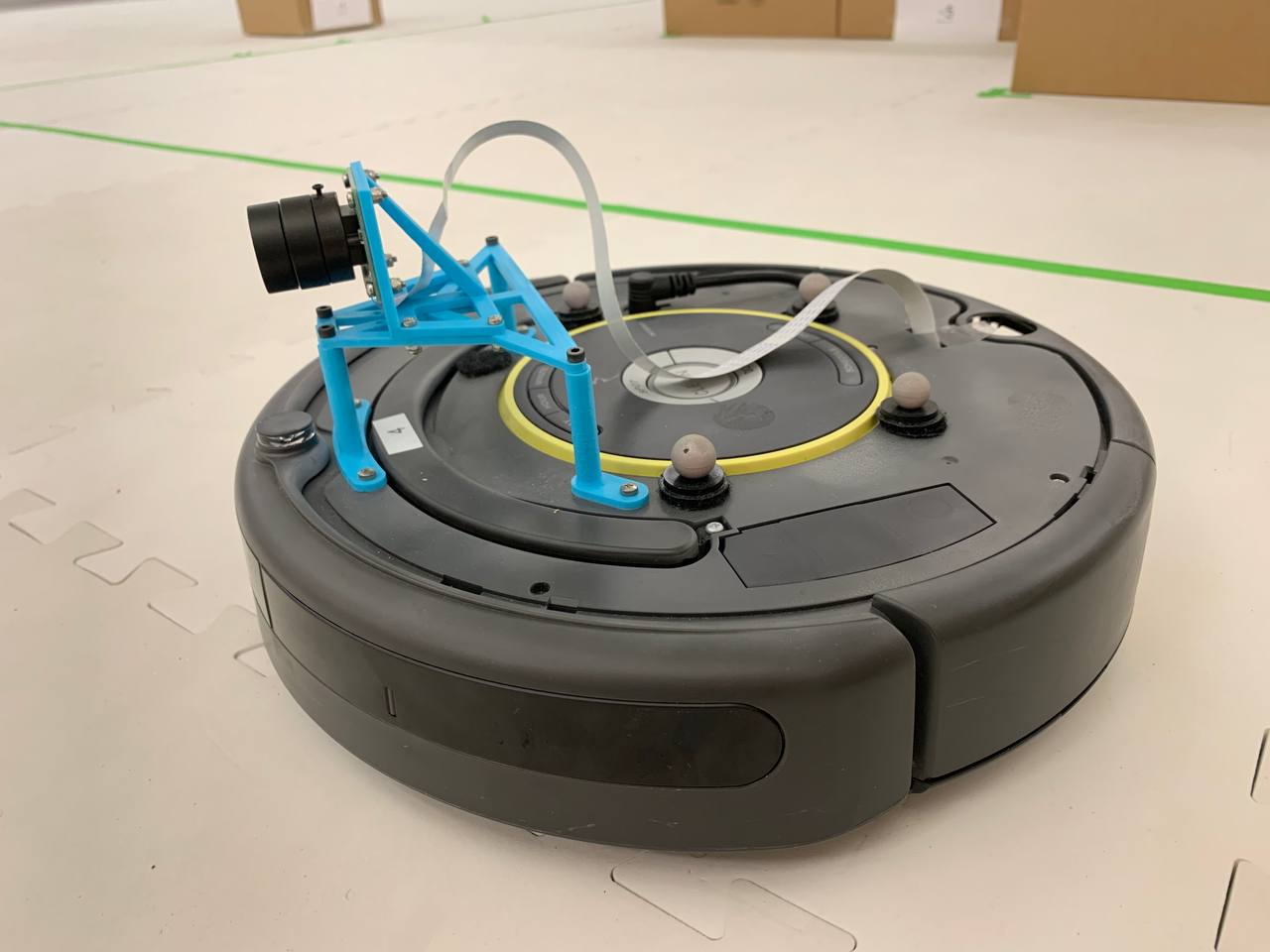} }}
  \subfloat[AprilTag]{{\label{fig:apriltag-figure}\includegraphics[width=2cm]{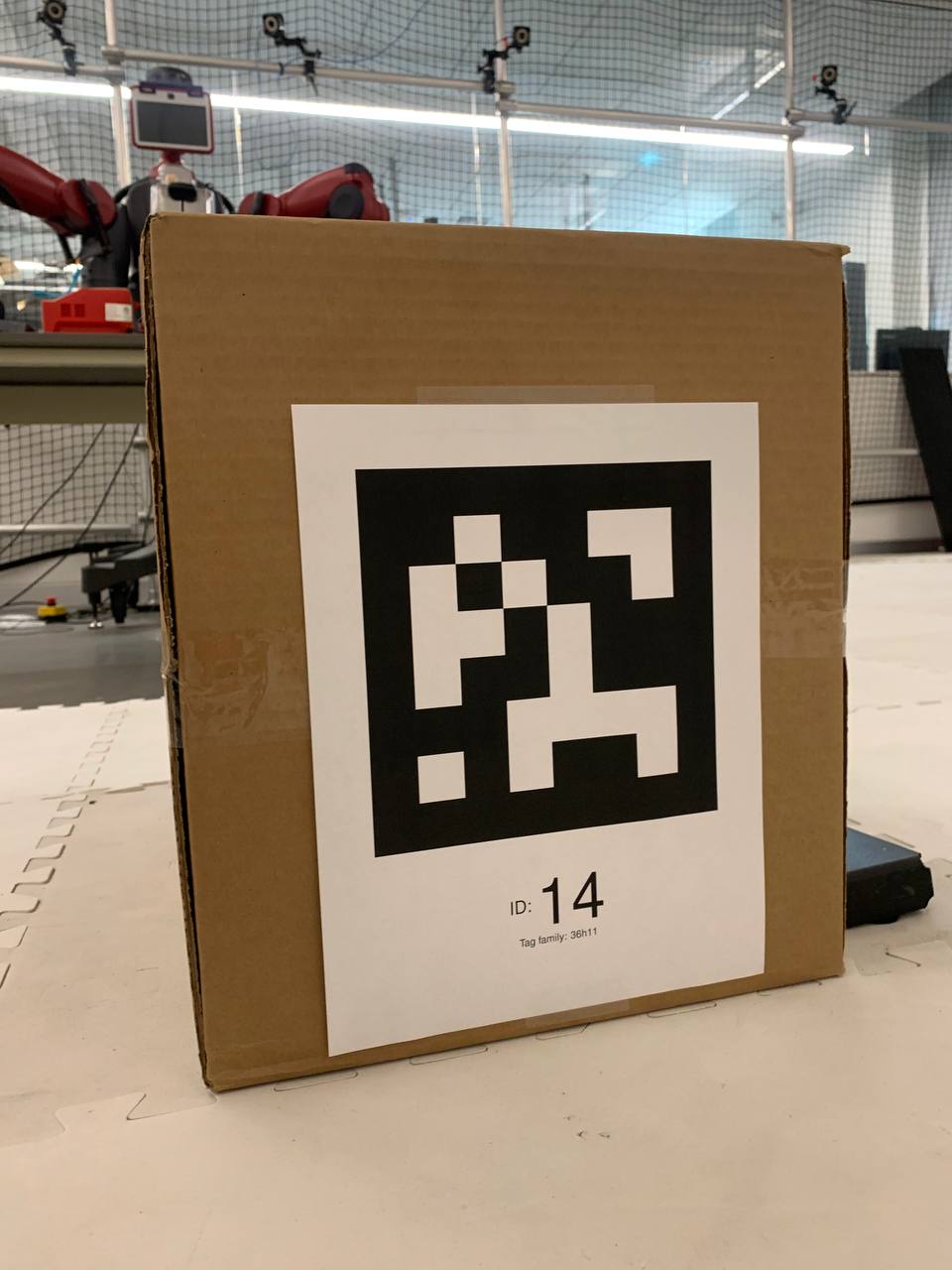} }}%
  \caption{The Create 2 robot used for the experiments is shown in Fig.~\ref{fig:robot}. We use AprilTags (Fig.~\ref{fig:apriltag-figure}) as the landmarks for the algorithm.}
  \label{fig:}
\end{figure}
\begin{figure}[t]
  \centering
  \subfloat[Real world environment]{\label{fig:original-env}{\includegraphics[width=2.75cm]{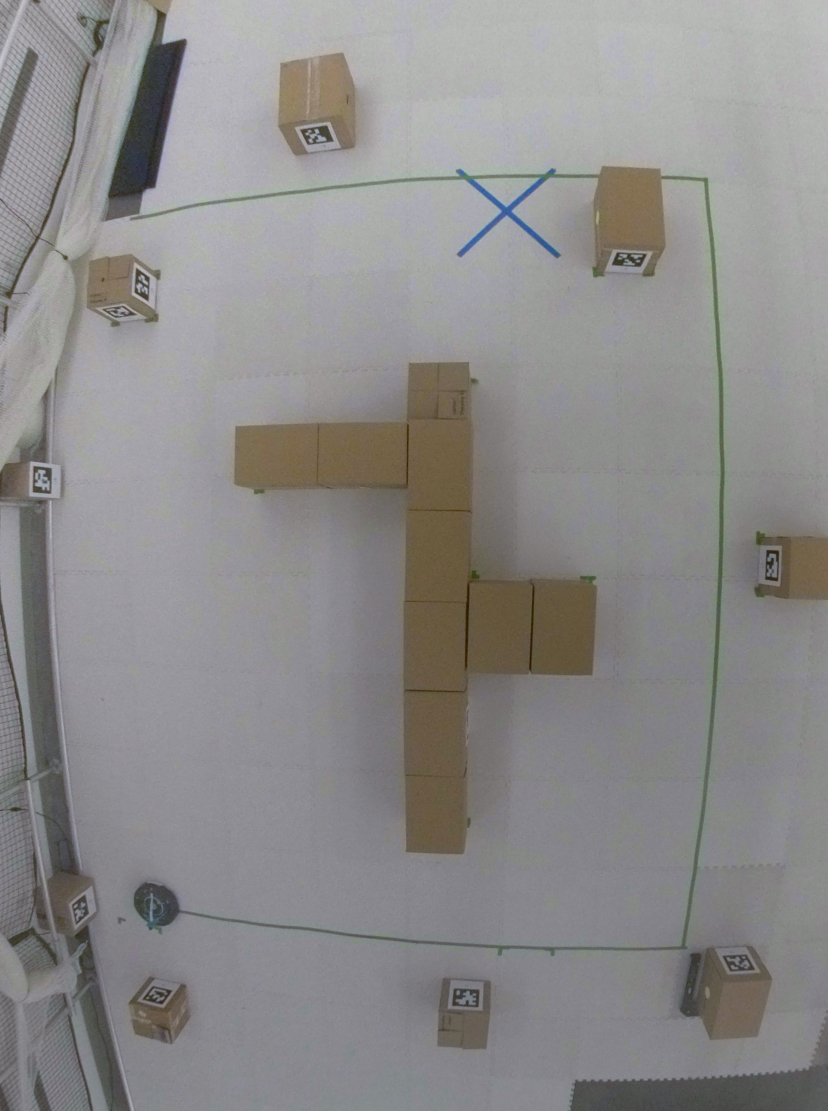} }}
  \subfloat[Deformed environment]{{\label{fig:rotated-env}\includegraphics[width=2.6cm]{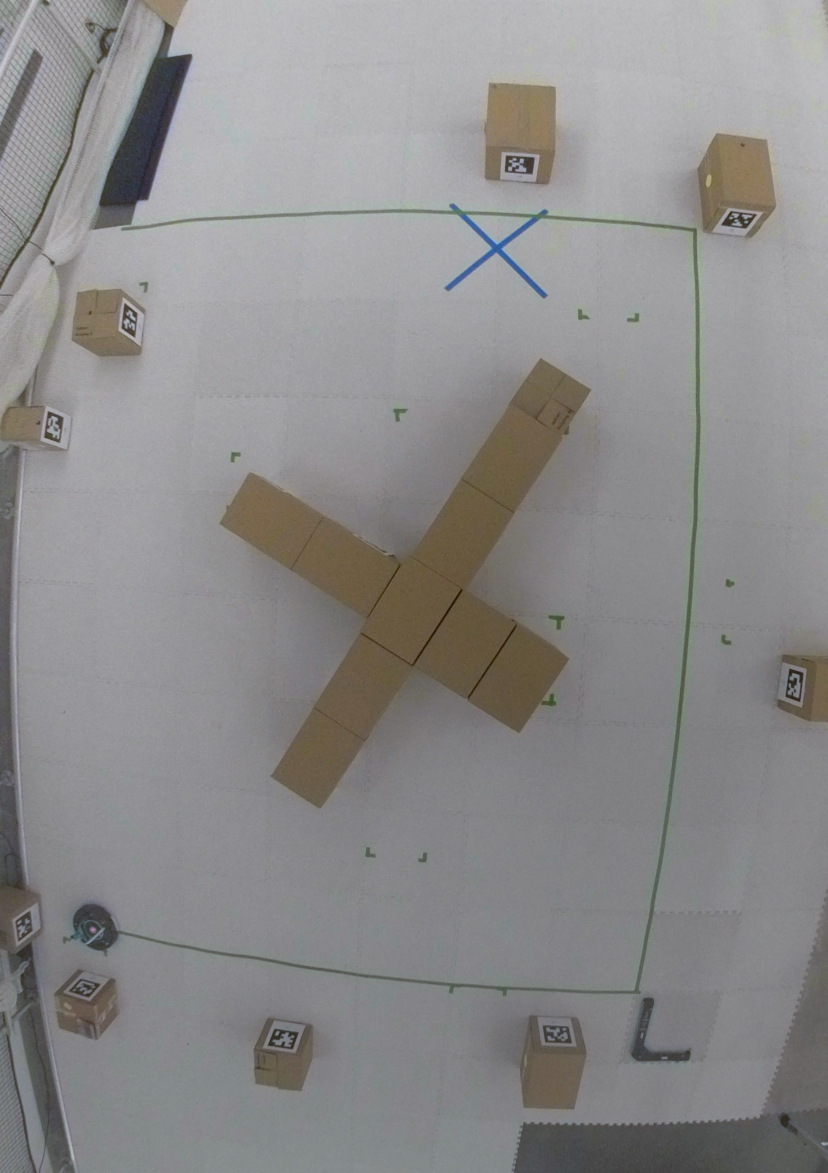} }}%
  \caption{On the left, Fig.~\ref{fig:original-env} shows the original environment used for our experiment. On the right, Fig.~\ref{fig:rotated-env} shows the deformed version of the environment.}
  \label{fig:real-example}
\end{figure}

\subsection{MATLAB Simulation}
The simulated MATLAB environment is presented in Fig.~\ref{fig:tree}, where the obstacles are represented by blue circles. To generate the \rrtstar{} we set the maximum number of iterations in \rrtstar{} to $1000$, and we choose $\eta=60$. 
The generated tree from \rrtstar{} and its simplified form are shown in Fig.~\ref{fig:tree-1} and Fig.~\ref{fig:tree-2} respectively. Then, we compute a controller for each edge of the simplified tree as described in Section~\ref{sec:tree-controller}. Fig.~\ref{fig:Matlab-sim} shows the resulting trajectories from four initial positions on two versions of the environment: one with the obstacles identical to the ones used during planning (Fig.~\ref{fig:Matlab-org}), and one with deformed obstacles (Fig.~\ref{fig:Matlab-rotate}); for the latter, also the landmarks have been modified accordingly. 
In all cases, the robot reaches the desired goal location by applying the sequence of controllers found during planning. Note that the deformed environment in Fig.~\ref{fig:Matlab-rotate} is successfully handled without replanning (i.e., by using the original controllers). This shows that our algorithm can be robust to (often very significant) deformations of the environment; however, there are also cases where, without replanning, the designed controllers might fail. Empirically, we noticed that there is a trade-off between obtaining shortest paths (that, by their nature, graze the obstacles) and the robustness of the controller; 
we plan to study this trade-off in future work.
\begin{figure}[b]
  \centering
  \subfloat[Original environment]{\label{fig:robot-original-trajectory}{\includegraphics[width=3.1cm]{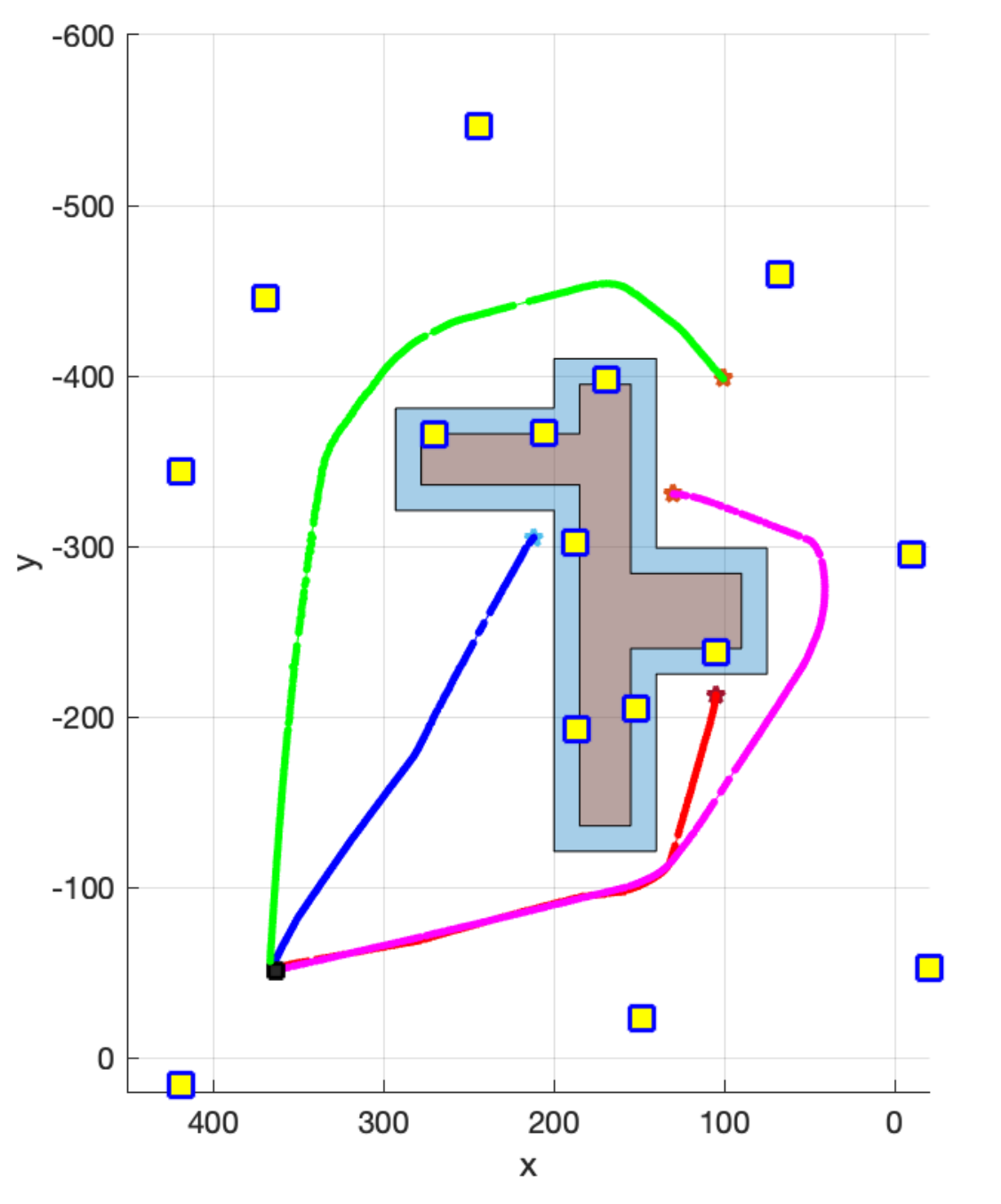} }} 
  \subfloat[Deformed environment]{{\label{fig:robot-deformed-trajectory}\includegraphics[width=3.1cm]{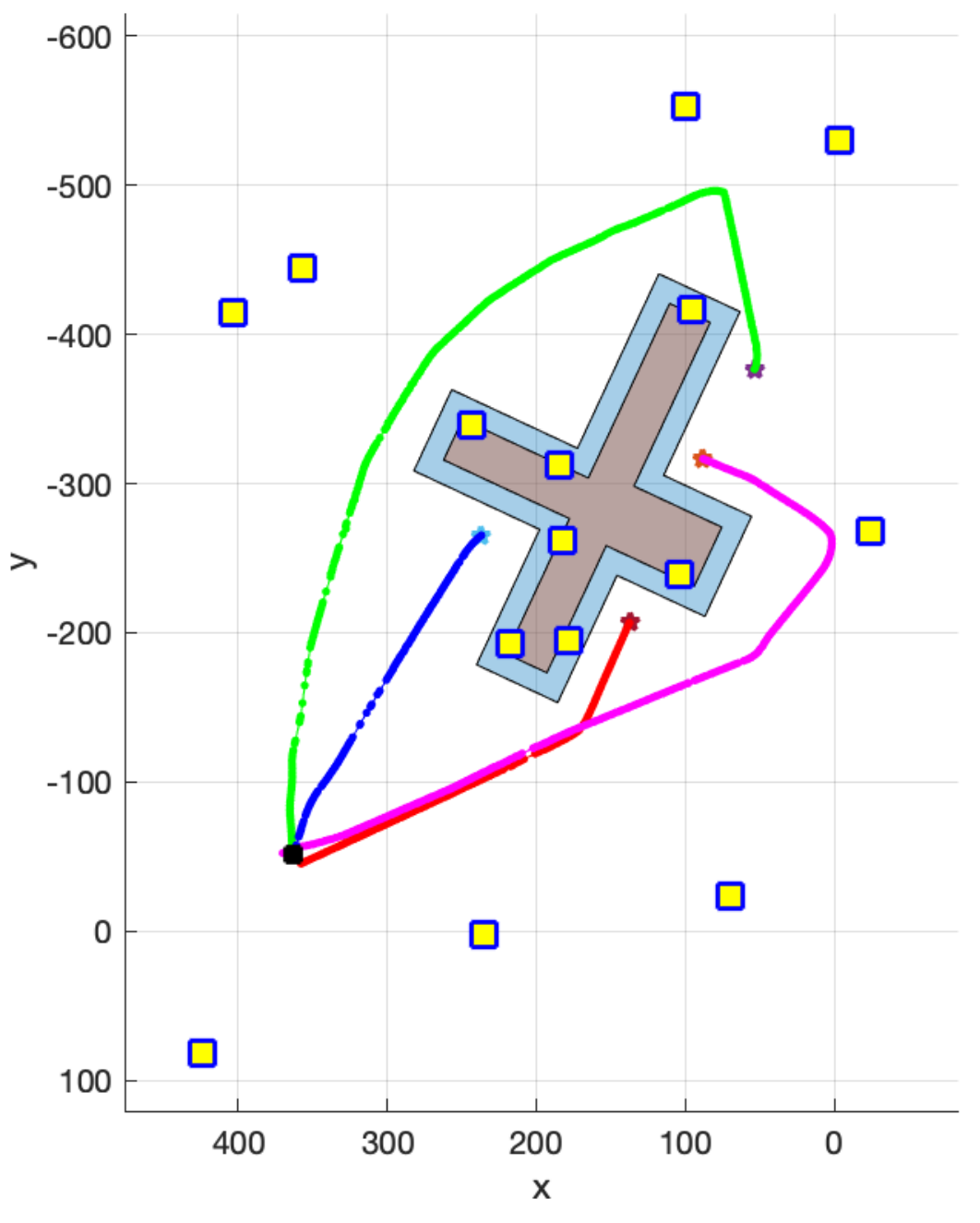} }}%
  \caption{Real trajectories followed by the Create for both the original and deformed environments. For all the tested start points, the robot converged to the expected goal position.}
  \label{fig:experiment-results}
\end{figure}

\subsection{iRobot Create 2 Experiment}
We further tested our algorithm on a Create 2 robot in lab environments that were similar to those used during the simulation. A bird's-eye view of the experimental setup is shown in Fig.~\ref{fig:real-example}. 
The robot is equipped with a calibrated onboard Arducam for Raspberry Pi camera \cite{arducam}, and we use OptiTrack motion capture system with 44 infra-red cameras to collect ground truth position information (the motion capture system is not used by our controller).
The landmarks are represented by fiducials (AprilTags~\cite{Wang2016}), and are placed at known positions and orientations with respect to the reference frame of the motion capture, using unique codes for data association. Our implementation is based on the Robot Operating System (ROS, \cite{ROS}).

There are three practical considerations that need to be taken into account in the implementation. First, due to the limited field of view of the camera, we use Proposition~\ref{prop:limited field view} to compute the controller based on one of the fiducials detected by the camera at each time instant. Second, the approach presented in the previous sections implicitly assumes that the robot has access to the measurements $Y$ in a frame which is rotationally aligned with the world reference frame.
To satisfy this assumption, the measured displacement of an AprilTag with respect to the Create in world coordinates $\left({}^{W}t_{AT-C}\right)$ is computed as:
\begin{equation}
  {}^{W}\mathbf{t}_{AT-C} = {}^{W}\mathbf{R}_{AT} \cdot \left( {}^{C}\mathbf{R}_{AT} \right)^{\top} \cdot {}^{C}\mathbf{t}_{AT-C},
\end{equation}
where ${}^{C}\mathbf{t}_{AT-C}$ is the measured displacement in Create coordinates, ${}^{C}\mathbf{R}_{AT}$ is the measured orientation of the AprilTag with respect to the Create, and ${}^{W}\mathbf{R}_{AT}$ is the \emph{a priori} known orientation of the AprilTag with respect to the world reference frame. Finally, $y_i\defeq{}^{W}\mathbf{t}_{AT-C}$ is used to compute the next control input $u$ following equation \eqref{u_limited}.

Finally, previous sections assumed a linear dynamical model for the robot, while the Create 2 has a unicycle dynamics. We map the original 2D input $u$ to a linear velocity $u_x$ along the $x$ axis of the robot and an angular velocity $\omega_z$ around the $z$ axis of the robot using a rather standard low-level controller:

\begin{align}
u_x = \small{\dfrac{\alpha}{\left\| u \right\|}
\begin{bmatrix}
    \cos{\varphi} \\ \sin{\varphi}
  \end{bmatrix}}\transpose u,&&
\omega_z = \small{\frac{\beta}{\left\| u \right\|} \bmat{0\\0\\1}\transpose \left(
                       \begin{bmatrix}
                         \cos{\varphi} \\ \sin{\varphi} \\ 0
                       \end{bmatrix}
  \times \begin{bmatrix}
    u \\ 0
  \end{bmatrix} \right)}
\end{align}
where $\varphi$ is the instantaneous yaw rotation of the robot with respect to the world reference frame, which is extracted from ${}^{C}\mathbf{R}_{AT}$ and ${}^{W}\mathbf{R}_{AT}$, and $\times$ represent the 3-D cross product; $\alpha$ and $\beta$ are user-defined scalar gains, 0.1 and 0.5 respectively.

Fig.~\ref{fig:experiment-results} depicts the real robot trajectories. Both in the original and deformed environments, the robot followed the edges of the \rrtstar{} tree and reached the expected goal for all starting positions and with the same control gains, despite the fact that the measurements were obtained with vision alone, and despite the different dynamics of the robot.

\section{CONCLUSIONS AND FUTURE WORKS}\label{sec:conclusions}
In this work, we introduced a new approach to integrate the high-level \rrtstar{} path planning with a low-level controller. We represented the environment via a simplified tree graph by implementing a modified sampling-based \rrtstar{} algorithm. We defined convex cells around the nodes of the tree and formulated a min-max robust Linear Program with CLF and CBF constraints to guarantee the stability and safety of the system. We built a robust output feedback controller for each cell which takes relative displacement measurements between a set landmarks positions and position of the robot as an input. We addressed the limited filed of view of the robot issue by reformulating a controller based on the visible landmarks. We validated our approach on both simulation environment and real-world environment and represented the robustness of our algorithm by applying the controller to a significantly deformed environment without replanning. We plan to prove the robustness of our algorithm theoretically and define the conditions of robustness of the controller in our future work. Furthermore, we plan to study the trade-off between optimal navigation and the robustness of the controller.


\bibliographystyle{IEEEtran}
\bibliography{references.bib,biblio/IEEEFull.bib,biblio/planning.bib,biblio/hardware.bib,biblio/control.bib}

\begin{thebibliography}{10}
\providecommand{\url}[1]{#1}
\csname url@samestyle\endcsname
\providecommand{\newblock}{\relax}
\providecommand{\bibinfo}[2]{#2}
\providecommand{\BIBentrySTDinterwordspacing}{\spaceskip=0pt\relax}
\providecommand{\BIBentryALTinterwordstretchfactor}{4}
\providecommand{\BIBentryALTinterwordspacing}{\spaceskip=\fontdimen2\font plus
\BIBentryALTinterwordstretchfactor\fontdimen3\font minus
  \fontdimen4\font\relax}
\providecommand{\BIBforeignlanguage}[2]{{%
\expandafter\ifx\csname l@#1\endcsname\relax
\typeout{** WARNING: IEEEtran.bst: No hyphenation pattern has been}%
\typeout{** loaded for the language `#1'. Using the pattern for}%
\typeout{** the default language instead.}%
\else
\language=\csname l@#1\endcsname
\fi
#2}}
\providecommand{\BIBdecl}{\relax}
\BIBdecl

\bibitem{kavraki1996probabilistic}
L.~E. Kavraki, P.~Svestka, J.-C. Latombe, and M.~H. Overmars, ``Probabilistic
  roadmaps for path planning in high-dimensional configuration spaces,''
  \emph{IEEE transactions on Robotics and Automation}, vol.~12, no.~4, pp.
  566--580, 1996.

\bibitem{rrt}
S.~M. LaValle, ``Rapidly-exploring random trees: A new tool for path
  planning,'' Iowa State University, Tech. Rep., 1998.

\bibitem{lavalle2001randomized}
S.~M. LaValle and J.~J. Kuffner~Jr, ``Randomized kinodynamic planning,''
  \emph{The international journal of robotics research}, vol.~20, no.~5, pp.
  378--400, 2001.

\bibitem{karaman2011sampling}
S.~Karaman and E.~Frazzoli, ``Sampling-based algorithms for optimal motion
  planning,'' \emph{The international journal of robotics research}, vol.~30,
  no.~7, pp. 846--894, 2011.

\bibitem{lavalle2006planning}
S.~M. LaValle, \emph{Planning algorithms}.\hskip 1em plus 0.5em minus
  0.4em\relax Cambridge university press, 2006.

\bibitem{renganathan2020towards}
V.~Renganathan, I.~Shames, and T.~H. Summers, ``Towards integrated perception
  and motion planning with distributionally robust risk constraints,''
  \emph{arXiv preprint arXiv:2002.02928}, 2020.

\bibitem{kuwata2008motion}
Y.~Kuwata, J.~Teo, S.~Karaman, G.~Fiore, E.~Frazzoli, and J.~How, ``Motion
  planning in complex environments using closed-loop prediction,'' in
  \emph{AIAA Guidance, Navigation and Control Conference and Exhibit}, 2008, p.
  7166.

\bibitem{positiveInvariant}
F.~Borrelli, A.~Bemporad, and M.~Morari, \emph{Predictive control for linear
  and hybrid systems}.\hskip 1em plus 0.5em minus 0.4em\relax Cambridge
  University Press, 2017.

\bibitem{weiss2017motion}
A.~Weiss, C.~Danielson, K.~Berntorp, I.~Kolmanovsky, and S.~Di~Cairano,
  ``Motion planning with invariant set trees,'' in \emph{2017 IEEE Conference
  on Control Technology and Applications (CCTA)}.\hskip 1em plus 0.5em minus
  0.4em\relax IEEE, 2017, pp. 1625--1630.

\bibitem{tedrake2009lqr}
R.~Tedrake, ``Lqr-trees: Feedback motion planning on sparse randomized trees,''
  \emph{MIT Press}, 2009.

\bibitem{ames2014control}
A.~D. Ames, J.~W. Grizzle, and P.~Tabuada, ``Control barrier function based
  quadratic programs with application to adaptive cruise control,'' in
  \emph{53rd IEEE Conference on Decision and Control}.\hskip 1em plus 0.5em
  minus 0.4em\relax IEEE, 2014, pp. 6271--6278.

\bibitem{hsu2015control}
S.-C. Hsu, X.~Xu, and A.~D. Ames, ``Control barrier function based quadratic
  programs with application to bipedal robotic walking,'' in \emph{2015
  American Control Conference (ACC)}.\hskip 1em plus 0.5em minus 0.4em\relax
  IEEE, 2015, pp. 4542--4548.

\bibitem{borrmann2015control}
U.~Borrmann, L.~Wang, A.~D. Ames, and M.~Egerstedt, ``Control barrier
  certificates for safe swarm behavior,'' \emph{IFAC-PapersOnLine}, vol.~48,
  no.~27, pp. 68--73, 2015.

\bibitem{Mahroo}
M.~Bahreinian, E.~Aasi, and R.~Tron, ``Robust planning and control for
  polygonal environments via linear programming,'' \emph{2020 IEEE American
  Control Conference (ACC)}, 2020.

\bibitem{Isidori:book95}
A.~Isidori, \emph{Nonlinear control systems}.\hskip 1em plus 0.5em minus
  0.4em\relax Springer Science \& Business Media, 1995.

\bibitem{zcbf1}
X.~Xu, P.~Tabuada, J.~W. Grizzle, and A.~D. Ames, ``Robustness of control
  barrier functions for safety critical control,'' \emph{IFAC-PapersOnLine},
  vol.~48, no.~27, pp. 54--61, 2015.

\bibitem{nguyen2016exponential}
Q.~Nguyen and K.~Sreenath, ``Exponential control barrier functions for
  enforcing high relative-degree safety-critical constraints,'' in \emph{2016
  American Control Conference (ACC)}.\hskip 1em plus 0.5em minus 0.4em\relax
  IEEE, 2016, pp. 322--328.

\bibitem{nash2007theta}
A.~Nash, K.~Daniel, S.~Koenig, and A.~Felner, ``Theta\^{}*: Any-angle path
  planning on grids,'' in \emph{AAAI}, vol.~7, 2007, pp. 1177--1183.

\bibitem{Choset:book05}
H.~M. Choset, K.~M. Lynch, S.~Hutchinson, G.~Kantor, W.~Burgard, L.~Kavraki,
  and S.~Thrun, \emph{Principles of robot motion: theory, algorithms, and
  implementation}.\hskip 1em plus 0.5em minus 0.4em\relax MIT press, 2005.

\bibitem{latombe2012robot}
J.-C. Latombe, \emph{Robot motion planning}.\hskip 1em plus 0.5em minus
  0.4em\relax Springer Science \& Business Media, 2012, vol. 124.

\bibitem{iCreate}
iRobot, ``{Create 2 Programmable Robot},''
  \url{https://edu.irobot.com/what-we-offer/create-robot}.

\bibitem{arducam}
Arducam, ``{Camera for Raspberry Pi},''
  \url{https://www.arducam.com/docs/cameras-for-raspberry-pi/}.

\bibitem{Wang2016}
J.~Wang and E.~Olson, ``{AprilTag 2: Efficient and robust fiducial
  detection},'' in \emph{2016 IEEE/RSJ International Conference on Intelligent
  Robots and Systems (IROS)}.\hskip 1em plus 0.5em minus 0.4em\relax IEEE, oct
  2016, pp. 4193--4198.

\bibitem{ROS}
\BIBentryALTinterwordspacing
{Stanford Artificial Intelligence Laboratory et al.}, ``Robotic operating
  system.'' [Online]. Available: \url{https://www.ros.org}
\BIBentrySTDinterwordspacing

\end{thebibliography}
\end{document}